\DeclarePairedDelimiter{\ceil}{\lceil}{\rceil}
\theoremstyle{plain}
\newtheorem{theorem}{Theorem}[section]
\newtheorem{lemma}[theorem]{Lemma}
\theoremstyle{definition}
\newtheorem{definition}[theorem]{Definition}
\newtheorem{notation}[theorem]{Notation}
\theoremstyle{remark}
\newtheorem*{remark}{Remark}
\icmltitlerunning{On the Limitations of Representing Functions on Sets}
\begin{document}

\twocolumn[
\icmltitle{On the Limitations of Representing Functions on Sets}

\icmlsetsymbol{equal}{*}

\begin{icmlauthorlist}
\icmlauthor{Edward Wagstaff}{equal,ox}
\icmlauthor{Fabian B. Fuchs}{equal,ox}
\icmlauthor{Martin Engelcke}{equal,ox}
\icmlauthor{Ingmar Posner}{ox}
\icmlauthor{Michael Osborne}{ox}
\end{icmlauthorlist}

\icmlaffiliation{ox}{Department of Engineering Science, University of Oxford, Oxford, United Kingdom}
\icmlcorrespondingauthor{}{\{ed, fabian, martin\}@robots.ox.ac.uk}

\icmlkeywords{Machine Learning, ICML}

\vskip 0.3in
]

\printAffiliationsAndNotice{\icmlEqualContribution} %

\begin{abstract}

Recent work on the representation of functions on sets has considered the use of summation in a latent space to enforce permutation invariance. In particular, it has been conjectured that the dimension of this latent space may remain fixed as the cardinality of the sets under consideration increases. However, we demonstrate that the analysis leading to this conjecture requires mappings which are highly discontinuous and argue that this is only of limited practical use. Motivated by this observation, we prove that an implementation of this model via continuous mappings (as provided by e.g. neural networks or Gaussian processes) actually imposes a constraint on the dimensionality of the latent space. Practical universal function representation for set inputs can only be achieved with a latent dimension at least the size of the maximum number of input elements.

\end{abstract}
\section{Introduction}

Machine learning models have had great success in taking advantage of structure in their input spaces:
recurrent neural networks are popular models for sequential data \citep{sutskever2014sequence} and convolutional neural networks are the state-of-the-art for many image-based problems \citep{he2015deep}.
Recently, however, models for unstructured inputs in the form of sets have rapidly gained attention \citep{ravanbakhsh2016deep, Zaheer2017, Qi2017, Lee2018, Murphy2018, Korshunova2018}.

Importantly, a range of machine learning problems can naturally be formulated in terms of sets; e.g. parsing a scene composed of a set of objects \citep{Eslami2016,Kosiorek2018}, making predictions from a set of points forming a 3D point cloud \citep{Qi2017,Qi2017a}, or training a set of agents in reinforcement learning \citep{sunehag2017value}.
Furthermore, attention-based models perform a weighted summation of a set of features \citep{Vaswani2017, Lee2018}.
Hence, understanding the mathematical properties of set-based models is valuable both in terms of set-structured applications as well as better understanding the capabilities and limitations of attention-based models.

Many popular machine learning models, including neural networks and Gaussian processes, are fundamentally based on vector inputs\footnote{Or inputs of higher rank, i.e. matrices and tensors.} rather than set inputs. In order to adapt these models for use with sets, we must enforce the property of \emph{permutation invariance}, i.e. the output of the model must not change if the inputs are reordered. Multiple authors, including \citet{ravanbakhsh2016deep}, \citet{Zaheer2017} and \citet{Qi2017}, have considered enforcing this property using a technique which we term \emph{sum-decomposition}, illustrated in \Cref{fig:figure1}. Mathematically speaking, we say that a function $f$ defined on sets of size $M$ is \emph{sum-decomposable via $Z$} if there are functions $\phi: \mathbb{R} \to Z$ and $\rho: Z \to \mathbb{R}$ such that\footnote{We use $\mathbb{R}$ here for brevity -- see \Cref{def:sum-decomp} for the fully general definition.}

\vspace{-8mm}
\begin{equation}
\label{eq:main}
f(X) = \rho \big( \Sigma_{x \in X} \phi(x) \big)
\end{equation}
\vspace{-10mm}

We refer to $Z$ here as the \emph{latent space}. Since summation is permutation-invariant, a sum-decomposition is also permutation-invariant. \citet{ravanbakhsh2016deep}, \citet{Zaheer2017} and \citet{Qi2017a} have also considered the idea of enforcing permutation invariance using other operations, e.g. $\text{\tt{max}}(\cdot)$. In this paper we concentrate on a detailed analysis of sum-decomposition, but some of the limitations we discuss also apply when $\text{\tt{max}}(\cdot)$ is used instead of summation.

\begin{figure}%
    \centering
    \includegraphics[width=0.45\textwidth]{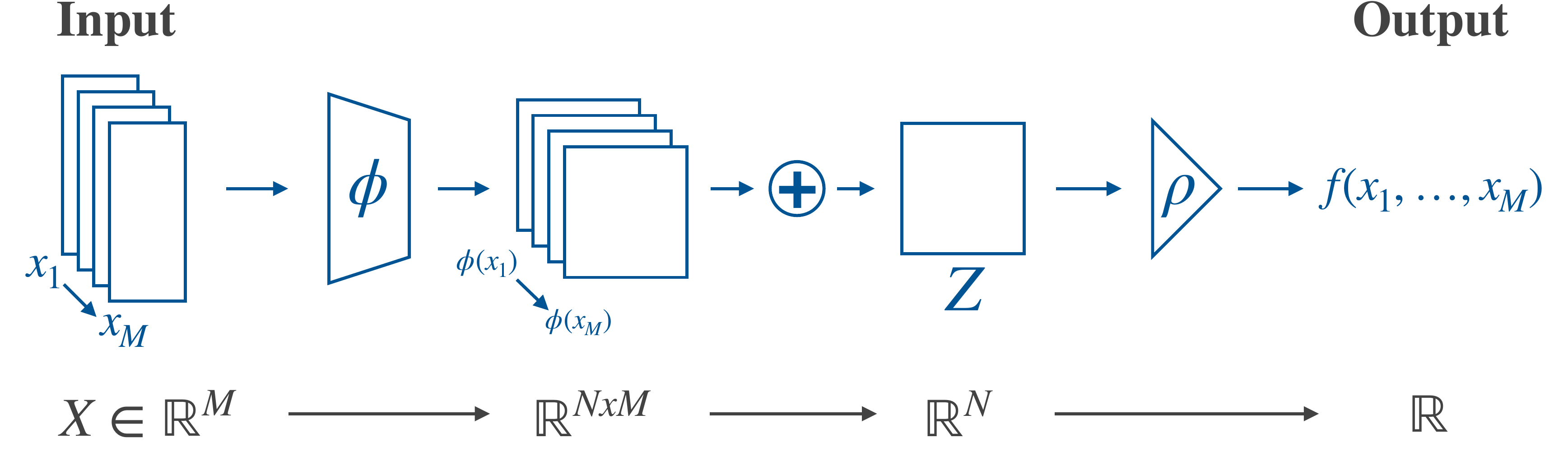}
    \caption{Illustration of the model structure proposed in several works \citep{Zaheer2017,Qi2017} for representing permutation-invariant functions.
    The sum operation enforces permutation invariance for the model as a whole. $\phi$ and $\rho$ can be implemented by e.g. neural networks.}
    \label{fig:figure1}
    \vspace{-5mm}
\end{figure}

\pagebreak

Our main contributions can be summarised as follows.
\begin{enumerate}
    \item 
    Recent proofs, e.g. in \citet{Zaheer2017}, consider functions on countable domains. 
    We explain why considering countable domains can lead to results of limited practical value (i.e. cannot be implemented with a neural network), and why considering continuity on uncountable domains such as $\mathbb{R}$ is necessary. 
    With reference to neural networks, we ground this discussion in the universal approximation theorem, which relies on continuity on uncountable domains $[0,1]^M$.
    \item In contrast to previous work \citep{Zaheer2017,Qi2017}, which considers sufficient conditions for universal function representation, we establish a \emph{necessary} condition for a sum-decomposition-based model to be capable of universal function representation. Additionally, we provide weaker sufficient conditions which imply a stronger version of universality. %
    Specifically, we show that the dimension of the latent space being at least as large as the maximum number of input elements is both necessary and sufficient for universal function representation.
\end{enumerate}

While primarily targeted at neural networks, these results hold for any implementation of sum-decomposition, e.g. using Gaussian processes, as long as it provides universal function approximation for continuous functions.
Proofs of all novel results are available in \Cref{sec:proofs}.

\section{Preliminaries}
\label{sec:preliminaries}

In this section we recount the theorems and proofs on sum-decomposition from \citet{Zaheer2017}.
We begin by introducing important definitions and the notation used throughout our work.
Note that we focus on permutation-invariant functions and do not discuss permutation equivariance which is also considered in \citet{Zaheer2017}.

\subsection{Definitions}

\begin{definition}
A function $f(\mathbf{x})$ is \emph{permutation-invariant} if $f(x_1,\dots,x_M) = f \bigl(x_{\pi(1)},\dots,x_{\pi(M)} \bigr)$ for all $\pi$.
\end{definition}

\begin{definition}
\label{def:sum-decomp}
We say that a function $f$ is \emph{sum-decomposable} if there are functions $\rho$ and $\phi$ such that

\begin{displaymath}
f(X) = \rho \bigl( \Sigma_{x \in X} \phi(x) \bigr).
\end{displaymath}

In this case, we say that $(\rho, \phi)$ is a \emph{sum-decomposition} of $f$.

Given a latent space $Z$, we say that $f$ is \emph{sum-decomposable via $Z$} when this expression holds for some $\phi$ whose codomain is $Z$, i.e. $\phi: \mathfrak{X} \to Z$.

We say that $f$ is \emph{continuously sum-decomposable} when this expression holds for some continuous functions $\rho$ and $\phi$.

We will also consider sum-decomposability where the inputs to $f$ are vectors rather than sets - in this context, the sum is over the elements of the input vector.
\end{definition}

\begin{definition}
A set $\mathfrak{X}$ is \emph{countable} if its number of elements, i.e. the cardinality, is smaller or equal to the number of elements in $\mathbb{N}$.
This includes both finite and countably infinite sets; e.g. $\mathbb{N}$, $\mathbb{Q}$, and subsets thereof.
\end{definition}

\begin{definition}
A set $\mathfrak{X}$ is \emph{uncountable} if its number of elements is greater than the number of elements in $\mathbb{N}$, e.g. $\mathbb{R}$ and certain subsets thereof.
\end{definition}

\begin{notation}
Denote the power set of a set $\mathfrak{X}$ by $2^\mathfrak{X}$.
\end{notation}

\begin{notation}
Denote the set of \emph{finite} subsets of a set $\mathfrak{X}$ by $\mathfrak{X}^{\mathcal{F}}$.
\end{notation}

\begin{notation}
Denote the set of subsets of a set $\mathfrak{X}$ containing at most $M$ elements by $\mathfrak{X}^{\leq M}$.
\end{notation}

\begin{remark}
Throughout, we discuss expressions of the form $\Phi(X) = \Sigma_{x \in X} \phi(x)$, where $X$ is a set. Note that care must be taken in interpreting this expression when $X$ is not finite -- we discuss this issue fully in \Cref{sec:infinite_sums}.
\end{remark}

\subsection{Background Theorems}
\label{sec:original_theorems}

\citet{Zaheer2017} consider the two cases where $X$ is a subset of, or drawn from, a \emph{countable} and an \emph{uncountable} universe $\mathfrak{X}$.
We now outline the theorems and proofs relating to these two cases.

\begin{theorem}[Countable case]
\label{ori_countable_theorem}
Let $f: 2^{\mathfrak{X}} \to \mathbb{R}$ where $\mathfrak{X}$ is countable.
Then $f$ is permutation-invariant if and only if it is sum-decomposable via $\mathbb{R}$.
\end{theorem}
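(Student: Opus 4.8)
The plan is to prove the two directions separately, with essentially all of the content residing in the ``only if'' direction. For the ``if'' direction, suppose $f$ is sum-decomposable via $\mathbb{R}$, so that $f(X) = \rho\bigl(\Sigma_{x\in X}\phi(x)\bigr)$ for some $\phi,\rho$. Since the value of a sum does not depend on the order in which its terms are added, $\Sigma_{x\in X}\phi(x)$ is unchanged under any reordering of the elements of $X$, and hence so is $f$. This direction uses nothing about countability.

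For the ``only if'' direction, the strategy is to construct a single injective ``encoding'' map $\Phi\colon 2^{\mathfrak{X}} \to \mathbb{R}$ of the form $\Phi(X) = \Sigma_{x\in X}\phi(x)$, and then to take $\rho$ to be a left inverse of $\Phi$ post-composed with $f$. Concretely, because $\mathfrak{X}$ is countable I would first fix an injection (an enumeration) $c\colon \mathfrak{X} \to \{1,2,3,\dots\}$, and then set $\phi(x) = b^{-c(x)}$ for a fixed base $b \ge 3$. The resulting sum $\Phi(X) = \Sigma_{x\in X} b^{-c(x)}$ is precisely the real number whose base-$b$ expansion has a digit $1$ in position $c(x)$ for each $x \in X$ and a digit $0$ elsewhere, so $\Phi$ records the indicator function of $X$.

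The key claim is that $\Phi$ is injective on $2^{\mathfrak{X}}$. Given two distinct subsets, let $n$ be the smallest index at which their indicator sequences disagree; a short estimate comparing the contribution $b^{-n}$ of the differing digit against the largest possible tail $\Sigma_{k>n} b^{-k} = b^{-n}/(b-1)$ shows that the two encodings differ by at least $b^{-n}(b-2)/(b-1) > 0$ whenever $b \ge 3$. (This is exactly why $b=2$ would be inadmissible: tails such as $0.0111\dots$ can equal a single leading digit, creating collisions.) Granting injectivity, I would define $\rho$ on the image of $\Phi$ by $\rho\bigl(\Phi(X)\bigr) = f(X)$ --- which is well-defined precisely because $\Phi$ is injective --- and extend $\rho$ arbitrarily (say by $0$) on the remainder of $\mathbb{R}$. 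Then $\rho\bigl(\Sigma_{x\in X}\phi(x)\bigr) = \rho\bigl(\Phi(X)\bigr) = f(X)$ for every $X$, giving the required sum-decomposition via $\mathbb{R}$ as in \Cref{def:sum-decomp}.

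The main obstacle is making this encoding legitimate for \emph{all} subsets, not merely finite ones, since $\mathfrak{X}$ may be countably infinite and $f$ is defined on the entire power set $2^{\mathfrak{X}}$. This requires (i) that the infinite sums $\Sigma_{x\in X} b^{-c(x)}$ genuinely converge, which holds since they are dominated by the geometric series $\Sigma_{n\ge 1} b^{-n}$, and (ii) that the injectivity estimate above survives the passage to infinite index sets, which is exactly what the choice $b \ge 3$ secures. The careful interpretation of these infinite sums is the point flagged in the Remark above and treated in \Cref{sec:infinite_sums}; everything else reduces to the elementary base-$b$ expansion facts sketched here.
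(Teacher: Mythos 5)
Your proof is correct and takes essentially the same approach as the paper: enumerate $\mathfrak{X}$ via an injection $c$, encode each subset as the real number $\Phi(X) = \Sigma_{x \in X} b^{-c(x)}$ whose base-$b$ digits record the indicator of $X$ (the paper fixes $b=4$), and set $\rho = f \circ \Phi^{-1}$ on the image of $\Phi$. Your explicit tail estimate showing injectivity for any $b \geq 3$, and the absolute-convergence check for infinite subsets, simply fill in details the paper asserts without proof.
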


\begin{proof}
Since $\mathfrak{X}$ is countable, each $x \in \mathfrak{X}$ can be mapped to a unique element in $\mathbb{N}$ by a function $c(x): \mathfrak{X} \to \mathbb{N}$.
Let $\Phi(X)=\sum_{x\in X} \phi(x)$. If we can choose $\phi$ so that $\Phi$ is injective, then we can set $\rho = f \circ \Phi^{-1}$, giving

\vspace{-5mm}
\begin{gather*}
f = \rho \circ \Phi \\
f(X) = \rho \bigl( \Sigma_{x \in X} \phi(x) \bigr)
\end{gather*}

i.e. f is sum-decomposable via $\mathbb{R}$.

Now consider $\phi(x) = 4^{-c(x)}$. Under this mapping, each $X \subset \mathfrak{X}$ corresponds to a unique real number expressed in base 4. Therefore $\Phi$ is injective, and the conclusion follows.
\end{proof}

\begin{remark}
This construction works for any set size $M$, and even for sets of infinite size.
However, it assumes that $X$ is a set with no repeated elements, i.e. multisets are not supported.
Specifically, the construction will fail with multisets because $\Phi$ fails to be injective if its domain includes multisets.
In \Cref{sec:multisets}, we extend \Cref{ori_countable_theorem} to also support multisets, with the restriction that infinite sets are no longer supported.
\end{remark}

\begin{theorem}[Uncountable case]
\label{ori_uncountable_theorem}
Let $M \in \mathbb{N}$, and let $f: [0,1]^M \to \mathbb{R}$ be a continuous function.
Then $f$ is permutation-invariant if and only if it is continuously sum-decomposable via $\mathbb{R}^{M+1}$.
\end{theorem}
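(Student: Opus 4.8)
The reverse implication is immediate and requires no continuity: since $\sum_{x \in X}\phi(x)$ does not depend on the ordering of the elements of $X$, any $f$ of the form $\rho\bigl(\sum_{x\in X}\phi(x)\bigr)$ is automatically permutation-invariant. All the work is therefore in the forward direction, and the plan is to exhibit an explicit continuous $\phi$ whose summed image separates multisets of size $M$, and then to recover $\rho$ as a continuous inverse composed with $f$.

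For the encoding I would take the moment map $\phi:\mathbb{R}\to\mathbb{R}^{M+1}$, $\phi(x) = (1, x, x^2, \dots, x^M)$, so that $\Phi(X) := \sum_{x\in X}\phi(x) = (M, p_1, \dots, p_M)$, where $p_k = \sum_{x\in X} x^k$ is the $k$-th power sum (the constant first coordinate simply records the cardinality and accounts for the extra dimension). This $\phi$ is manifestly continuous. The crucial claim is that $\Phi$ separates multisets, i.e. that it induces an injective map on the quotient $[0,1]^M/S_M$. This is the algebraic heart of the argument: by Newton's identities the power sums $p_1,\dots,p_M$ determine the elementary symmetric polynomials $e_1,\dots,e_M$, which are, up to sign, the coefficients of $\prod_{x\in X}(t-x)$; since a monic polynomial is determined by its coefficients and in turn determines its multiset of roots, knowledge of $\Phi(X)$ recovers $X$.

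With injectivity in hand, continuity of $\rho$ follows from a compactness argument rather than any explicit formula. The quotient $[0,1]^M/S_M$ is compact as a continuous image of the cube, and $\mathbb{R}^{M+1}$ is Hausdorff, so the induced continuous injection $\bar\Phi:[0,1]^M/S_M \to \mathbb{R}^{M+1}$ is a homeomorphism onto its compact image $K := \Phi([0,1]^M)$. Because $f$ is permutation-invariant it descends to a continuous $\bar f$ on the quotient, and I would set $\rho_0 := \bar f \circ \bar\Phi^{-1}: K \to \mathbb{R}$, which is continuous and satisfies $\rho_0(\Phi(X)) = f(X)$ by construction. Finally, since $K$ is closed in $\mathbb{R}^{M+1}$, the Tietze extension theorem yields a continuous $\rho:\mathbb{R}^{M+1}\to\mathbb{R}$ extending $\rho_0$, and $(\rho,\phi)$ is the required continuous sum-decomposition via $\mathbb{R}^{M+1}$.

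The main obstacle is the separation claim for $\Phi$: everything downstream — the homeomorphism, the descent of $f$, and the extension of $\rho$ — is soft topology that goes through routinely once injectivity is secured, whereas injectivity rests essentially on the structure of symmetric functions and the correspondence between a polynomial's coefficients and its roots. A secondary point to verify carefully is that $\bar\Phi^{-1}$ is genuinely continuous; this is exactly where compactness of $[0,1]^M$ is indispensable, since a continuous bijection need not admit a continuous inverse in its absence.
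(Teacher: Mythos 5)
Your proposal is correct and follows essentially the same route as the proof the paper summarises from \citet{Zaheer2017}: the sum-of-powers map $\Phi_q(\mathbf{x}) = \sum_m x_m^q$, $q=0,\dots,M$, shown to be injective (on multisets) with a continuous inverse, and $\rho = f \circ \Phi^{-1}$. You in fact fill in two details the paper's sketch glosses over --- the compactness argument giving the continuous inverse, and the Tietze extension making $\rho$ a genuine continuous function on all of $\mathbb{R}^{M+1}$ rather than only on $\mathrm{Im}(\Phi)$.
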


The proof by \citet{Zaheer2017} of \Cref{ori_uncountable_theorem} is more involved than for \Cref{ori_countable_theorem}.
We do not include it here in full detail, but briefly summarise below.

\begin{enumerate}
    \item Show that the mapping $\Phi: [0,1]^M \to \mathbb{R}^{M+1}$ defined by $\Phi_q(\mathbf{x}) = \sum_{m=1}^{M} (x_m)^q$ for $q = 0,\dots,M$ is injective and continuous.\footnote{In the original proof, $\Phi$ is denoted $E$.}
    \item Show that $\Phi$ has a continuous inverse.
    \item Define $\rho: \mathbb{R}^{M+1} \to \mathbb{R}$ by $\rho = f \circ \Phi^{-1}$.
    \item Define $\phi(x): \mathbb{R} \to \mathbb{R}^{M+1}$ by $\phi_q(x) = x^q$.
    \item Note that, by definition of $\rho$ and $\phi$, $(\rho, \phi)$ is a continuous sum-decomposition of $f$ via $\mathbb{R}^{M+1}$. \qed
\end{enumerate}

\begin{remark}
\citet{Zaheer2017} conjecture that any continuous permutation-invariant function $f$ on $2^{[0,1]}$, the power set of $[0,1]$, is continuously sum-decomposable. In \Cref{sec:continuity}, we show that this is not possible, and in \Cref{sec:continuous} we show that even if the domain of $f$ is restricted to $[0,1]^\mathcal{\leq F}$, the finite subsets of $[0,1]$, then $N \geq M$ is a \emph{necessary condition} for arbitrary functions $f$ to be continuously sum-decomposable. Additionally, we prove that $N=M$ is a \emph{sufficient condition} -- implying together with the above that it is not possible to do better than this.
\end{remark}

\section{The Importance of Continuity}
\label{sec:continuity}

In this section, we argue that continuity is essential to discussions of function representation, that it has been neglected in prior work on permutation-invariant functions, and that this neglect has implications for the strength and generality of existing results.

Intuitively speaking a function is continuous if, at every point in the domain, the variation of the output can be made arbitrarily small by limiting the variation in the input. Continuity is the reason that, for instance, working to machine precision usually produces sensible results. 
Truncating to machine precision alters the input to a function slightly, but continuity ensures that the change in output is also slight.

\begin{figure}[ht!]
\input{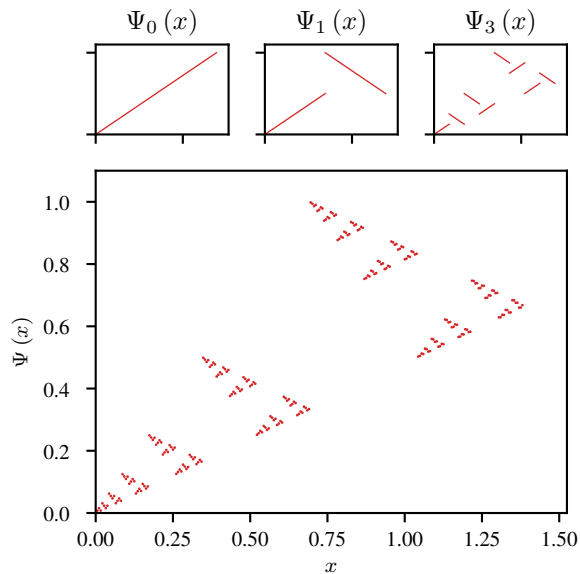}
\caption{The function $\Psi$ shown here is continuous at every rational point in $[0, \ln 4]$. Intuitively, this is because all jumps occur at irrational values, namely at certain fractions of $\ln 4$. It defies our intuitions for what continuity should mean, and illustrates the fact that continuity on $\mathbb{Q}$ is a much weaker property than continuity on $\mathbb{R}$. The latter property is required to satisfy the universal approximation theorem for neural networks. $\Psi$ is defined and discussed in \Cref{sec:function_on_q}.
}
\label{fig:q_continuous}
\end{figure}

In \citet{Zaheer2017}, the authors demonstrate that when $\mathfrak{X}$ is a countable set, e.g. the rational numbers, any function $f : 2^\mathfrak{X} \to \mathbb{R}$ is sum-decomposable via $\mathbb{R}$.
This is taken as a hopeful indication that sum-decomposability may extend to uncountable domains, e.g. $\mathfrak{X} = \mathbb{R}$.
Extending to the uncountable case may appear, at first glance, to be a mere formality -- we are, after all, ultimately interested in implementing algorithms on finite hardware.
Nevertheless, it is not true that a theoretical result for a countably infinite domain must be strong enough for practical purposes.
In fact, considering functions on uncountably infinite domains such as $\mathbb{R}^N$ is of real importance.

Turning specifically to neural networks, the universal approximation theorem says that any \emph{continuous} function can be approximated by a neural network, but not that \emph{any} function can be approximated by a neural network \cite{cybenko1989approximation}.
A similar statement is true for other approximators, such as some Gaussian processes \cite{Rasmussen2006GPML}.
The notion of continuity required here is specifically that of continuity on compact subsets of $\mathbb{R}^N$.

Crucially, if we wish to work mathematically with continuity in a way that closely matches our intuitions, we \emph{must} consider uncountable domains.
To illustrate this point, consider the rational numbers $\mathbb{Q}$. $\mathbb{Q}$ is dense in $\mathbb{R}$, and it is tempting to think that $\mathbb{Q}$ is therefore ``all we need''. 
However, a theoretical guarantee of continuity on $\mathbb{Q}$ is weak, and does not imply continuity on $\mathbb{R}$.
The universal approximation theorem for neural networks relies on continuity on $\mathbb{R}$, and we cannot usefully take continuity on $\mathbb{Q}$ as a proxy for this property.
\Cref{fig:q_continuous} shows a function which is continuous on $\mathbb{Q}$, and illustrates that a continuous function on $\mathbb{Q}$ may not extend continuously to $\mathbb{R}$.
This figure also illustrates that continuity on $\mathbb{Q}$ defies our intuitions about what continuity should mean, and is too weak for the universal approximation theorem for neural networks.
We require the stronger notion of continuity on $\mathbb{R}$.

In light of the above, it is clear that continuity is a key property for function representation, and also that there is a crucially important difference between countable and uncountable domains. This raises two problems for \Cref{ori_countable_theorem}. First, the theorem does not consider the continuity of the sum-decomposition when the domain $\mathfrak{X}$ has some non-trivial topological structure (e.g. $\mathfrak{X} = \mathbb{Q}$). Second, we still care about continuity on $\mathbb{R}$, and there is no guarantee that this is possible given continuity on $\mathbb{Q}$.

In fact, the continuity issue cannot be overcome -- we can demonstrate that in general the sum-decomposition of \Cref{ori_countable_theorem}, which goes via $\mathbb{R}$, cannot be made continuous for $\mathfrak{X}=\mathbb{Q}$:

\begin{restatable}{theorem}{qdiscontinuous}
\label{thm:q_discontinuous}
There exist functions $f : 2^\mathbb{Q} \to \mathbb{R}$ such that, whenever $(\rho, \phi)$ is a sum-decomposition of $f$ via $\mathbb{R}$, $\phi$ is discontinuous at every point $q \in \mathbb{Q}$.
\end{restatable}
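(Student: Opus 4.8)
The plan is to pick a single concrete target function $f$ and show that \emph{any} sum-decomposition of it is forced to use a $\phi$ that is simultaneously summable over all of $\mathbb{Q}$ and bounded away from zero near any point of continuity; these two requirements are incompatible. Concretely, I would take $f$ to be the injective base-$4$ encoding from the proof of \Cref{ori_countable_theorem}: fix a bijection $c : \mathbb{Q} \to \mathbb{N}$ and set $f(X) = \sum_{x \in X} 4^{-c(x)}$, which is well-defined on all of $2^{\mathbb{Q}}$ (including infinite subsets) and injective. Now suppose $(\rho, \phi)$ is any sum-decomposition of this $f$ via $\mathbb{R}$, and write $\Phi(X) = \sum_{x \in X} \phi(x)$, so that $f = \rho \circ \Phi$. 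Since $f$ is injective, $\Phi$ must be injective too: $\Phi(X) = \Phi(Y)$ would give $f(X) = \rho(\Phi(X)) = \rho(\Phi(Y)) = f(Y)$, hence $X = Y$.

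First I would extract two structural constraints on $\phi$. The crucial one is \emph{summability}. Because the domain of $f$ is the full power set $2^{\mathbb{Q}}$, the expression $\Phi(X) = \sum_{x \in X}\phi(x)$ must denote a well-defined real number even when $X$ is infinite, in particular for $X = \mathbb{Q}$. As discussed in \Cref{sec:infinite_sums}, an unordered sum over an infinite index set is well-defined (independent of the order of summation) only when it converges absolutely, so this forces $\sum_{q \in \mathbb{Q}} |\phi(q)| < \infty$. The second constraint is that $\phi$ is nowhere zero: comparing the singleton $\{q\}$ with the empty set gives $\Phi(\{q\}) = \phi(q)$ and $\Phi(\emptyset) = 0$, so injectivity of $\Phi$ yields $\phi(q) \neq 0$ for every $q \in \mathbb{Q}$.

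Finally I would combine these to rule out continuity at any point. Suppose, for contradiction, that $\phi$ is continuous at some $q_0 \in \mathbb{Q}$, and set $c_0 = \phi(q_0)$, which is nonzero by the previous step. Continuity supplies a $\delta > 0$ with $|\phi(q) - c_0| < |c_0|/2$, and hence $|\phi(q)| > |c_0|/2$, for every rational $q$ in $(q_0 - \delta,\, q_0 + \delta)$. Since this interval contains infinitely many rationals, $\sum_{q \in \mathbb{Q}} |\phi(q)| \geq \sum_{q \in (q_0 - \delta,\, q_0 + \delta)} |\phi(q)| = \infty$, contradicting summability. Therefore $\phi$ is discontinuous at every rational point. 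I expect the only genuine obstacle to lie in the second paragraph: rigorously justifying that well-definedness of $f$ on infinite subsets forces absolute summability of $\phi$ over $\mathbb{Q}$ — i.e. the order-independence of an unordered sum — which is precisely the infinite-sum technicality deferred to \Cref{sec:infinite_sums}. Once that is in hand, the remaining steps are elementary.
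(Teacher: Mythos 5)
Your proof is correct, and it takes a genuinely different route from the paper's. The paper instead chooses $f(X) = \sup(X)$ and argues locally: it first shows $\phi$ is nowhere zero (via $\sup(\{q-1,q\}) \neq \sup(\{q-1\})$), then that $\Phi$ must be finite on every upper-bounded set -- explicitly closing the loophole where $\rho$ is extended to $\mathbb{R}\cup\{\infty\}$, since $\rho(\infty)$ would force $\sup(X) = \sup(X)+1$ -- deduces that any bounded neighbourhood contains only finitely many points with $\phi(x) > \frac{1}{n}$, and finally constructs, for each $q$, a sequence $q_n \to q$ with $\phi(q_n) \to 0 \neq \phi(q)$. Your injective base-$4$ encoding replaces this local finiteness-plus-sequence argument with a single global one: injectivity of $f$ forces injectivity of $\Phi$ (hence $\phi$ nowhere zero via $\Phi(\{q\}) \neq \Phi(\emptyset)$), totality of $f$ on $2^{\mathbb{Q}}$ forces $\sum_{q \in \mathbb{Q}} |\phi(q)| < \infty$, and continuity at any $q_0$ would place infinitely many terms above $|\phi(q_0)|/2$ in that sum. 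The summability step you flag as the only obstacle is indeed settled by the paper's convention in \Cref{sec:infinite_sums} (unordered infinite sums are required to converge absolutely, else $\Phi(X) = \infty$): since sum-decomposability via $\mathbb{R}$ means $\rho : \mathbb{R} \to \mathbb{R}$, the identity $f(\mathbb{Q}) = \rho(\Phi(\mathbb{Q}))$ already forces $\Phi(\mathbb{Q}) \in \mathbb{R}$; and if you want the paper's extra robustness against $\rho$ being extended to $\mathbb{R} \cup \{\infty\}$, your argument patches in one line -- divergence would give $\Phi(\mathbb{Q}) = \Phi(\mathbb{Q} \setminus \{q\}) = \infty$ and hence $f(\mathbb{Q}) = f(\mathbb{Q} \setminus \{q\})$, contradicting injectivity. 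As for what each approach buys: your $f$ is genuinely total on $2^{\mathbb{Q}}$ (the paper's $\sup$ is undefined on $\emptyset$ and on unbounded sets, which its proof has to wave away), your final contradiction is cleaner, and your argument is in effect a proof of the stronger remark the paper makes immediately after the theorem -- that injectivity of $f$ alone forces $\phi$ to be discontinuous at every non-isolated point of the underlying domain, with nothing specific to $\mathbb{Q}$. What the paper's choice of $\sup$ buys is a natural, practically motivated witness and a purely local argument requiring only boundedness, which it then reuses essentially verbatim to prove \Cref{thm:uncountable_only_finite_subsets}.
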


We can actually say something more general than the above. Our proof can easily be adapted to demonstrate that if $f$ is injective, or if we want a fixed $\phi$ to suffice for any $f$, then $\phi$ can only be continuous at isolated points of the underlying set $\mathfrak{X}$, regardless of whether $\mathfrak{X}=\mathbb{Q}$. I.e., it is not specifically due to the structure of $\mathbb{Q}$ that continuous sum-decomposability fails. In fact, it fails whenever we have a non-trivial topological structure. For functions which we want to model using a neural network, this is worrying.

It is not possible to represent an everywhere-discontinuous $\phi$ with a neural network. We therefore view \Cref{ori_countable_theorem} as being of limited practical relevance and as not providing a reliable intuition for what should be possible in the uncountable case. We do however see this result as mathematically interesting, and have obtained the following result extending it to the case where the domain $\mathfrak{X}$ is uncountable. This result is slightly weaker than the countable case, in that the domain of $f$ can contain arbitrarily large finite sets, but not infinite sets.

\begin{restatable}{theorem}{uncdiscontinuous}
\label{thm:uncountable_discontinuous}
Let $f : \mathbb{R}^{\mathcal{F}} \to \mathbb{R}$. Then $f$ is sum-decomposable via $\mathbb{R}$.
\end{restatable}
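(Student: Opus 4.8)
The plan is to follow the template of the proof of \Cref{ori_countable_theorem}: it suffices to exhibit a single function $\phi : \mathbb{R} \to \mathbb{R}$ for which $\Phi(X) = \sum_{x \in X} \phi(x)$ is injective on the finite subsets $\mathbb{R}^{\mathcal{F}}$. Given such a $\phi$, I would set $\rho = f \circ \Phi^{-1}$ on the image of $\Phi$ and extend $\rho$ arbitrarily off the image; then $(\rho, \phi)$ is a sum-decomposition of $f$ via $\mathbb{R}$. So the entire content of the theorem lies in building a $\phi$ whose finite subset sums are all distinct. The base-$4$ encoding used in the countable case is unavailable here because it relied on an injection into $\mathbb{N}$, which $\mathbb{R}$ does not admit.

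The key idea is to force the image of $\phi$ to be linearly independent over $\mathbb{Q}$. I would regard $\mathbb{R}$ as a vector space over $\mathbb{Q}$ and take a Hamel basis $B$. Since $B$ has cardinality $|\mathbb{R}|$, I can fix a bijection $\phi : \mathbb{R} \to B$; in particular $\phi$ is injective and each value $\phi(x)$ is a distinct basis vector. Writing $\phi(X) = \{\phi(x) : x \in X\}$, the injectivity of $\phi$ ensures that distinct finite subsets of $\mathbb{R}$ are carried to distinct finite subsets of $B$.

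To verify injectivity of $\Phi$, suppose $X, Y \in \mathbb{R}^{\mathcal{F}}$ with $X \neq Y$ and $\Phi(X) = \Phi(Y)$. Then $\phi(X)$ and $\phi(Y)$ are distinct finite subsets of $B$, so their symmetric difference is nonempty; cancelling the common terms in $\sum_{x \in X} \phi(x) = \sum_{y \in Y} \phi(y)$ leaves
\[
\sum_{b \in \phi(X) \setminus \phi(Y)} b - \sum_{b \in \phi(Y) \setminus \phi(X)} b = 0,
\]
a nontrivial $\mathbb{Q}$-linear combination of distinct basis vectors with all coefficients $\pm 1$. This contradicts the linear independence of $B$, so $\Phi$ is injective and the construction is complete. (The empty set causes no trouble: its image is $0$, which cannot equal any nonempty subset sum because basis elements are nonzero.)

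The main obstacle, and the reason this is purely an existence result with no practical content, is producing a $\mathbb{Q}$-linearly independent subset of $\mathbb{R}$ of full cardinality. A Hamel basis supplies exactly this, but its existence rests on the axiom of choice, and any resulting $\phi$ is necessarily non-measurable and discontinuous everywhere, echoing \Cref{thm:q_discontinuous}. This construction is also what forces the restriction to finite subsets: for infinite $X$ the expression $\sum_{x \in X} \phi(x)$ is an infinite sum of basis vectors with no reason to converge, so—unlike the summable encoding used in \Cref{ori_countable_theorem}—the argument genuinely breaks down for infinite sets, which is precisely the sense in which this result is weaker than the countable case.
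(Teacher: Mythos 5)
Your proof is correct, and it shares the paper's overall skeleton -- reduce the theorem to exhibiting a $\phi$ for which $\Phi(X) = \Sigma_{x \in X}\phi(x)$ is injective on $\mathbb{R}^{\mathcal{F}}$, obtain $\phi$ as a bijection from $\mathbb{R}$ onto a continuum-sized set of reals whose distinct finite subsets have distinct sums, and finish with $\rho = f \circ \Phi^{-1}$ -- but you establish the key existence claim by a genuinely different device. The paper introduces the notion of a \emph{finite-sum-distinct} (f.s.d.) set, applies Zorn's Lemma directly to the poset of f.s.d. subsets of $\mathbb{R}$ ordered by inclusion (verifying that the union of a chain is f.s.d.), and then needs a separate counting argument to show that a maximal f.s.d. set has cardinality $\mathfrak{c}$: for an f.s.d. set $D$ with $|D| = \kappa < \mathfrak{c}$, the set $F_D$ of \emph{forbidden elements} satisfies $|F_D| \leq |D^{\mathcal{F}} \times D^{\mathcal{F}}| = \kappa < \mathfrak{c}$, so $D \cup F_D \neq \mathbb{R}$ and $D$ cannot be maximal. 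You instead invoke a Hamel basis $B$ of $\mathbb{R}$ over $\mathbb{Q}$: $\mathbb{Q}$-linear independence is strictly stronger than finite-sum-distinctness (it excludes all nontrivial rational relations, not merely the $\pm 1$ relations your symmetric-difference cancellation produces), so $B$ is automatically f.s.d., and the standard fact that $|B| = \mathfrak{c}$ (otherwise $\mathbb{R}$, being the set of finite rational combinations of $B$, would have cardinality below $\mathfrak{c}$) replaces the forbidden-elements count. Both routes pass through the axiom of choice, so neither is more constructive; yours buys brevity by outsourcing the two hard steps (maximality and cardinality) to a textbook object, while the paper's argument is self-contained, isolates exactly the property needed, and its forbidden-element count shows the slightly stronger statement that \emph{every} maximal f.s.d. set has full cardinality. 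Your closing observations -- that any such $\phi$ is necessarily non-measurable and everywhere discontinuous, and that failure of convergence is precisely what blocks infinite sets -- are accurate and consistent with the paper's discussion and with \Cref{thm:uncountable_only_finite_subsets}.
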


Once again, the sum-decomposition is highly discontinuous. The limitation that $f$ is not defined on infinite sets cannot be overcome:

\begin{restatable}{theorem}{uncfinite}
\label{thm:uncountable_only_finite_subsets}
If $\mathfrak{X}$ is uncountable, then there exist functions $f : 2^\mathfrak{X} \to \mathbb{R}$ which are not sum-decomposable. Note that this holds even if the sum-decomposition $(\rho, \phi)$ is allowed to be discontinuous.
\end{restatable}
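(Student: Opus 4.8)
The plan is to exploit the fact that, for the expression $\Sigma_{x \in X}\phi(x)$ to define a real number for \emph{every} $X \in 2^{\mathfrak{X}}$ --- in particular for uncountable $X$, and indeed for $X = \mathfrak{X}$ itself --- the map $\phi$ is forced to vanish outside a countable set. Once this is established, the summation operator $\Phi(X) = \Sigma_{x\in X}\phi(x)$ cannot distinguish the empty set from a singleton $\{x\}$ whenever $x$ lies outside this countable ``support'', so any $f$ that \emph{does} distinguish such sets cannot factor as $\rho \circ \Phi$.

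First I would pin down the key lemma: if $\phi : \mathfrak{X} \to \mathbb{R}$ is such that the unordered sum $\Sigma_{x\in X}\phi(x)$ (in the sense made precise in \Cref{sec:infinite_sums}) converges to a real number for every $X \subseteq \mathfrak{X}$, then $S_\phi := \{x \in \mathfrak{X} : \phi(x) \neq 0\}$ is countable. This is the standard fact for unordered real summation: convergence of $\Sigma_{x\in\mathfrak{X}}\phi(x)$, via the Cauchy criterion for unordered sums tested against singleton index sets, forces $\{x : |\phi(x)| \geq 1/n\}$ to be finite for each $n \in \mathbb{N}$, and $S_\phi$ is the countable union of these finite sets. (For a finite-dimensional latent space $\mathbb{R}^N$ one applies the same single-coordinate estimate to each of the $N$ coordinates.)

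Next I would construct the witness $f$. The point is that $S_\phi$ depends on the decomposition, so I must fix $f$ \emph{before} any $\phi$; fortunately the lemma gives a uniform obstruction. Define $f : 2^{\mathfrak{X}} \to \mathbb{R}$ by $f(\emptyset) = 0$ and $f(X) = 1$ for every nonempty $X$ (only the values on $\emptyset$ and on singletons will matter). Suppose, for contradiction, that $(\rho,\phi)$ is a sum-decomposition of $f$. By the lemma $S_\phi$ is countable, while $\mathfrak{X}$ is uncountable, so $\mathfrak{X}\setminus S_\phi \neq \emptyset$ and I may choose some $b \in \mathfrak{X}\setminus S_\phi$. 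Then $\Phi(\{b\}) = \phi(b) = 0 = \Phi(\emptyset)$, the empty sum being $0$, whence $f(\{b\}) = \rho\bigl(\Phi(\{b\})\bigr) = \rho\bigl(\Phi(\emptyset)\bigr) = f(\emptyset)$, i.e.\ $1 = 0$, a contradiction. Hence $f$ is not sum-decomposable; since no continuity was assumed anywhere, this holds even when $\rho$ and $\phi$ are permitted to be discontinuous.

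The main obstacle is the lemma --- making the claim that summability over the whole uncountable index set forces countable support fully rigorous with respect to exactly the notion of infinite summation adopted in \Cref{sec:infinite_sums}. Everything downstream is then an immediate non-injectivity (pigeonhole) argument. I would also note explicitly that the same support-countability argument rules out sum-decomposability via $\mathbb{R}^N$ for every finite $N$, not merely via $\mathbb{R}$, so the counterexample $f$ is robust to enlarging the latent space to any finite dimension.
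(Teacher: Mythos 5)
Your key lemma (everywhere-convergent unordered sums force countable support) is correct, and it is in fact the same engine the paper starts from (``a sum over uncountably many elements can converge only if countably many elements are non-zero''). The genuine gap is in your choice of witness, and it concerns exactly the divergent case your argument skips. Under the paper's own convention from \Cref{sec:infinite_sums}, $\Phi(X)$ is \emph{not} required to be a real number: when $\Sigma_{x \in X}|\phi(x)|$ diverges one sets $\Phi(X) = \infty$, and the proof of \Cref{thm:q_discontinuous} explicitly entertains decompositions in which $\rho$ is defined on $\mathbb{R} \cup \{\infty\}$ -- the theorem is meant to rule these out as well. Against that permissive notion, your nonemptiness indicator \emph{is} sum-decomposable: take $\phi \equiv 1$ and $\rho(0) = 0$, $\rho(t) = 1$ for $t \neq 0$, $\rho(\infty) = 1$. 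Then $\rho(\Phi(\emptyset)) = 0$, $\rho(\Phi(X)) = \rho(|X|) = 1$ for nonempty finite $X$, and $\rho(\Phi(X)) = \rho(\infty) = 1$ for infinite $X$, so $(\rho, \phi)$ decomposes your $f$. The structural reason is that $\infty$ is absorbing: $\Phi(X) = \infty$ implies $\Phi(X \cup \{y\}) = \infty$ for every $y$, so the only functions obstructed through the divergent sets are those \emph{not} invariant under adjoining a single element to such sets. Your $f$ is invariant under exactly this operation, whereas the paper's witness $f = \sup$ is not (adjoining $\sup(X) + 1$ changes the value), which is precisely what lets the paper push the contradiction $\sup(X) = \sup(X) + 1$ through the $\Phi(X) = \infty$ case and then dispose of the remaining countable-support case by showing $\mathfrak{X}$ would be a countable union of the finite sets $\{x : \phi(x) > 1/n\}$.

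To be fair to your argument: under the strict reading of \Cref{def:sum-decomp}, in which $\Sigma_{x \in X}\phi(x)$ must converge to an element of the latent space for \emph{every} $X \in 2^{\mathfrak{X}}$, your proof is complete, and it even buys two things the paper's does not -- it works for an arbitrary uncountable $\mathfrak{X}$ with no order structure (the paper's $\sup$ presupposes one, which is itself a loose point in the paper for general $\mathfrak{X}$), and it handles latent spaces $\mathbb{R}^N$ explicitly via the coordinatewise support argument. But as written, the proposal proves a weaker statement than the one the paper's proof establishes. The fix is small: either state up front that you adopt the strict convergence convention and note the resulting loss of strength, or keep your countable-support lemma and swap the witness for a function that separates some divergent set from a one-element enlargement of it -- for instance $f = \sup$ when $\mathfrak{X} \subseteq \mathbb{R}$, as in the paper.
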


To summarise, we show why considering countable domains can lead to results of limited practical value and why considering continuity on uncountable domains is necessary.
We point out that some of the previous work is therefore of limited practical relevance, but regard it as mathematically interesting.
In this vein, we extend the analysis of sum-decomposability when continuity is not required.

\section{Practical Function Representation}
\label{sec:continuous}

Having established the necessity of considering continuity on $\mathbb{R}$, we now explore the implications for sum-decomposability of permutation-invariant functions. These considerations lead to concrete recommendations for model design and provide theoretical support for elements of current practice in the area.
Specifically, we present three theorems whose implications can be summarised as follows.

\begin{enumerate}
    \item A latent dimensionality of $M$ is \emph{sufficient} for representing all continuous permutation-invariant functions on sets of size $\leq M$.
    \item To guarantee that all continuous permutation-invariant functions can be represented for sets of size $\leq~M$, a latent dimensionality of at least $M$ is \emph{necessary}.
\end{enumerate}

The key result which is the basis of the second statement and which underpins this discussion is as follows.

\begin{restatable}{theorem}{maxnotdecomp}
\label{thm:max_not_decomposable}
Let $M > N \in \mathbb{N}$. Then there exist permutation invariant continuous functions $f : \mathbb{R}^M \to \mathbb{R}$ which are {\upshape\bfseries not} continuously sum-decomposable via $\mathbb{R}^N$.
\end{restatable}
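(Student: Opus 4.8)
The plan is to prove this existence statement by first reducing to the boundary case and then extracting a dimension obstruction from invariance of domain. It suffices to treat $N = M-1$: if a function is not continuously sum-decomposable via $\mathbb{R}^{M-1}$, then it is not continuously sum-decomposable via any $\mathbb{R}^N$ with $N < M$, since a decomposition $(\rho,\phi)$ with $\phi\colon\mathbb{R}\to\mathbb{R}^N$ can always be promoted to one via $\mathbb{R}^{M-1}$ by padding $\phi$ with zero coordinates and pre-composing $\rho$ with a coordinate projection. So I will exhibit a single continuous permutation-invariant $f\colon\mathbb{R}^M\to\mathbb{R}$ that defeats every continuous $(\rho,\phi)$ with $\phi\colon\mathbb{R}\to\mathbb{R}^{M-1}$.

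The core obstruction is topological. Write $\Phi(\mathbf{x}) = \Sigma_{m=1}^M \phi(x_m)$ and restrict attention to the strictly increasing tuples $\Delta = \{\,\mathbf{x} : x_1 < \dots < x_M\,\}$, an open convex set homeomorphic to $\mathbb{R}^M$ that meets every permutation-orbit of distinct points exactly once. The map $\Phi|_\Delta\colon\Delta\to\mathbb{R}^{M-1}$ is continuous, and I claim it cannot be injective: a continuous injection from an open subset of $\mathbb{R}^M$ into $\mathbb{R}^{M-1}$, composed with the inclusion of $\mathbb{R}^{M-1}$ as a hyperplane in $\mathbb{R}^M$, would by invariance of domain have open image, yet that image lies in a hyperplane and so has empty interior. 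Hence $\Phi$ agrees on two distinct sorted tuples, i.e. on two distinct $M$-element sets. Since $f = \rho\circ\Phi$ forces $f$ to be constant on each fibre of $\Phi$, the whole argument reduces to choosing $f$ so that it is provably \emph{not} constant on the fibres of any additively separable $\Phi$ of this form.

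A single collision does not suffice on its own, because every continuous $f$ is automatically constant on some continuum. I would therefore strengthen the previous step: by a dimension-theoretic argument (a light map cannot lower covering dimension, so from $\dim\Delta = M > M-1 \geq \dim\Phi(\Delta)$ some fibre must have dimension at least $M-(M-1)=1$), a fibre of $\Phi$ contains a non-degenerate continuum $F\subset\Delta$ of genuinely distinct sets on which $f$ would have to be constant. The task is then to pick $f$ whose level sets within $\Delta$ can never contain such an $F$. The leverage is the additive-separable structure of $\Phi$: along $F$ the coordinate values are redistributed while $\Sigma_m\phi(x_m)$ is held fixed, and I would take $f$ to be an order-statistic-type quantity, for instance the maximum $f(\mathbf{x}) = \max_m x_m$, and argue that such a redistribution must alter the largest coordinate somewhere along $F$.

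I expect the detection step to be the main obstacle: converting \enquote{$\Phi$ is non-injective, with a positive-dimensional fibre} into \enquote{the \emph{specific} $f$ genuinely takes two different values on one fibre of $\Phi$}. Invariance of domain and dimension theory constrain $\Phi$ only through its target dimension, whereas the property I must exploit is genuinely the additive form $\Sigma_m\phi(x_m)$; without it the claim is false, since an arbitrary continuous map $\mathbb{R}^M\to\mathbb{R}^{M-1}$ can be arranged so that any prescribed $f$ factors through it. The mechanism is clearest in the base case $M=2$, $N=1$: on a level set of $\phi(x_1)+\phi(x_2)$ through the diagonal, the spread $\lvert x_1-x_2\rvert$ varies continuously away from $0$, so no such $f$ can be constant there. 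The heart of the general argument is to promote this movement-within-a-fibre to all $M>N$ while keeping careful track that the chosen $f$ is sensitive to it.
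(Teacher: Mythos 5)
Your topological endgame coincides with the paper's: restrict $\Phi(\mathbf{x}) = \Sigma_{m=1}^M \phi(x_m)$ to the open sorted cone $\{x_1 < \dots < x_M\}$, note it is homeomorphic to $\mathbb{R}^M$, and invoke invariance of domain to forbid a continuous injection into $\mathbb{R}^N$ with $N < M$. But you run the implication in the wrong direction, and the step you flag as \enquote{the main obstacle} is not a technicality to be deferred --- it is the entire mathematical content of the theorem, and the mechanism you propose for it fails. You first establish non-injectivity of $\Phi$ (fine), upgrade it to a positive-dimensional fibre (salvageable: restrict to a compact sub-cube and apply Hurewicz's dimension-lowering theorem, so that some fibre contains a nondegenerate continuum), and then hope to show $f = \max$ is non-constant on that continuum. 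This cannot work: within the sorted cone the level sets of $\max$ are $(M-1)$-dimensional (fix $x_M$, vary $x_1 < \dots < x_{M-1}$ freely), so there is ample room for a one-dimensional fibre continuum to redistribute only the non-maximal coordinates while the largest stays fixed. The dimension-theoretic argument gives you no control whatsoever over \emph{which} fibre is large, so you cannot conclude that any fibre meets two level sets of $f$. Your own caveat --- that the argument must genuinely exploit the additive form of $\Phi$, since an arbitrary continuous map $\mathbb{R}^M \to \mathbb{R}^{M-1}$ can absorb any prescribed $f$ --- is exactly right, but your plan never exploits it beyond the $M = 2$ sketch.

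The paper resolves this by reversing the logic: rather than deriving non-injectivity and trying to detect it with $f$, it \emph{assumes} the decomposition of $\max$ and proves in \Cref{lem:injection} that $\Phi$ restricted to size-$M$ sets must then be injective --- applying $\rho$ to equal sums forces equal maxima, cancelling $\phi(x_{\text{max}})$ reduces to size $M-1$, and induction on set size finishes. Injectivity on the sorted cone then contradicts invariance of domain directly, with no fibre dimension theory needed. Notably, your $M=2$ argument already contains the germ of this: there you use the decomposition identity itself at auxiliary inputs ($\rho(2\phi(x)) = x$ pins down $\phi$ as a monotone injection, and the intermediate value theorem locates a diagonal crossing where $\max$ drops). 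The correct generalisation is to iterate this use of the identity --- peel off the agreeing maximum, and (since the theorem concerns vectors in $\mathbb{R}^M$, where repeated and auxiliary entries are available) re-pad the remaining coordinates with fresh values below $\min(x_1, y_1)$ so that the size-$M$ identity reads off the next order statistic --- rather than to hunt for a large fibre. Your reduction to $N = M-1$ by zero-padding is harmless but unnecessary; the padding direction that matters is the one inside the induction, and without \Cref{lem:injection} or an equivalent the proposal does not yield the theorem.
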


Restated in more practical terms, this implies that for a sum-decomposition-based model to be capable of representing \emph{arbitrary} continuous functions on sets of size $M$, the latent space in which the summation happens must be chosen to have dimension at least $M$. A similar statement is true for the analogous concept of \emph{max-decomposition} -- details are available in \Cref{app:max-decomp}.

To prove this theorem, we first need to state and prove the following lemma.

\begin{lemma}
\label{lem:injection}
Let $M, N \in \mathbb{N}$, and suppose $\phi : \mathbb{R} \to \mathbb{R}^N$, $\rho : \mathbb{R}^N \to \mathbb{R}$ are functions such that:

\begin{equation}
\label{eq:max_rep}
\text{max}(X) = \rho \left( \Sigma_{x \in X} \phi(x) \right)
\end{equation}

Now let $\Phi(X) = \Sigma_{x \in X} \phi(x)$, and write $\Phi_M$ for the restriction of $\Phi$ to sets of size $M$.

Then $\Phi_M$ is injective for all $M$.
\end{lemma}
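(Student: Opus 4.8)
The plan is to prove injectivity of $\Phi_M$ by induction on the set size $M$, exploiting the fact that \Cref{eq:max_rep} lets us \emph{recover} the maximum of a set directly from its sum-representation. The single most useful observation is that $\rho \circ \Phi = \max$, so whenever two sets $X$ and $Y$ satisfy $\Phi(X) = \Phi(Y)$ we immediately obtain $\max(X) = \rho(\Phi(X)) = \rho(\Phi(Y)) = \max(Y)$. This identifies the largest element common to both sets and — crucially — lets us subtract its contribution from the shared sum, reducing to strictly smaller sets. This ``peeling'' of the maximum element is the engine of the whole argument.

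For the base case $M = 1$, a set is a singleton $\{x\}$ with $\Phi(\{x\}) = \phi(x)$, and applying $\rho$ recovers $\max(\{x\}) = x$; hence $\Phi(\{x\}) = \Phi(\{y\})$ forces $x = y$, so the singletons coincide. For the inductive step, assume $\Phi_{M-1}$ is injective and suppose $X, Y$ are sets of size $M$ with $\Phi(X) = \Phi(Y)$. By the observation above, $m := \max(X) = \max(Y)$, so $m$ lies in both sets. Writing $X' = X \setminus \{m\}$ and $Y' = Y \setminus \{m\}$, these are genuine sets of size $M - 1$, and since the summands are disjoint we have $\Phi(X') = \Phi(X) - \phi(m) = \Phi(Y) - \phi(m) = \Phi(Y')$. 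The inductive hypothesis gives $X' = Y'$, and adjoining the common element $m$ yields $X = Y$.

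The argument uses neither continuity nor injectivity of $\phi$; it is purely a consequence of the algebraic structure of \Cref{eq:max_rep}. The step I would check most carefully — and the only place any subtlety hides — is the reduction: I must ensure that removing $m$ produces honest sets rather than multisets, so that $\Phi(X') = \Phi(X) - \phi(m)$ holds exactly, and that the decomposition \Cref{eq:max_rep} is available at every cardinality $\leq M$ that the induction descends through, not merely at size $M$. Granting this, no genuine obstacle remains; the proof is short, and the main work is simply organising the induction so that each peeling step stays within the regime where $\rho \circ \Phi = \max$ applies.
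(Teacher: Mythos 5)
Your proposal is correct and follows essentially the same route as the paper's own proof: induction on $M$, using \eqref{eq:max_rep} to deduce $\max(X) = \max(Y)$ from $\Phi_M(X) = \Phi_M(Y)$, peeling off the (common) maximum element, subtracting $\phi(m)$ from the shared sum, and invoking injectivity of $\Phi_{M-1}$. Your explicit check that removing the maximum yields genuine sets of size $M-1$ (so the subtraction is exact) is a sensible point of care, but it matches the paper's decomposition $X = \{x_{\max}\} \cup X_{\text{rem}}$ and introduces no new idea.
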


\begin{proof}
We proceed by induction. The base case $M=1$ is clear.

Now let $M \in \mathbb{N}$, and suppose that $\Phi_{M-1}$ is injective. Now suppose there are sets $X, Y$ such that $\Phi_M(X) = \Phi_M(Y)$. First note that, by \eqref{eq:max_rep}, we must have:

\begin{equation}
\label{eq:equal_max}
\text{max}(X) = \text{max}(Y)
\end{equation}

So now write:

\begin{equation}
\label{eq:decomp}
X = \{x_\text{max}\} \cup X_\text{rem} ~ ; ~ Y = \{y_\text{max}\} \cup Y_\text{rem}
\end{equation}

where $x_\text{max} = \text{max}(X)$, and similarly for $y_\text{max}$.

But now:

\begin{displaymath}
\begin{split}
\Phi_M(X) & = \Phi_{M-1}(X_\text{rem}) + \phi(x_\text{max}) \\
          & = \Phi_{M-1}(Y_\text{rem}) + \phi(y_\text{max}) \\
          & = \Phi_M(Y)
\end{split}
\end{displaymath}

From the central equality, and \eqref{eq:equal_max}, we have:

\begin{displaymath}
\Phi_{M-1}(X_\text{rem}) = \Phi_{M-1}(Y_\text{rem})
\end{displaymath}

Now by injectivity of $\Phi_{M-1}$, we have $X_\text{rem} = Y_\text{rem}$. Combining this with \eqref{eq:equal_max} and \eqref{eq:decomp}, we must have $X = Y$, and so $\Phi_M$ is injective.
\end{proof}

Equipped with this lemma, we can now prove \Cref{thm:max_not_decomposable}.

\begin{proof}

We proceed by contradiction. Suppose that functions $\phi$ and $\rho$ exist satisfying \eqref{eq:max_rep}. Define $\Phi_M : \mathbb{R}^M \to \mathbb{R}^N$ by:

\begin{displaymath}
\Phi_M(\mathbf{x}) = \Sigma_{i=1}^M \phi(x_i)
\end{displaymath}

Denote the set of all $x \in \mathbb{R}^M$ with $x_1 < x_2 < ... < x_M$ by $\mathbb{R}_\text{ord}^M$, and let $\Phi_M^\text{ord}$ be the restriction of $\Phi_M$ to $\mathbb{R}_\text{ord}^M$. Since $\Phi_M^\text{ord}$ is a sum of continuous functions, it is also continuous, and by \Cref{lem:injection}, it is injective.

Now note that $\mathbb{R}_\text{ord}^M$ is a convex open subset of $\mathbb{R}^M$, and is therefore homeomorphic to $\mathbb{R}^M$. Therefore, our continuous injective $\Phi_M^\text{ord}$ can be used to construct a continuous injection from $\mathbb{R}^M$ to $\mathbb{R}^N$. But it is well known that no such continuous injection exists when $M>N$. Therefore our decomposition \eqref{eq:max_rep} cannot exist.
\end{proof}

It is crucial to note that functions $f$ for which a lower-dimensional sum-decomposition does not exist need not be ``badly-behaved'' or difficult to specify. The limitation extends to functions of genuine interest. For our proof, we have specifically demonstrated that even $\text{\tt{max}}(X)$ is not continuously sum-decomposable when $N < M$.

From \Cref{ori_uncountable_theorem}, we also know that for a fixed input set size $M$, any continuous permutation-invariant function is continuously sum-decomposable via $\mathbb{R}^{M+1}$. It is, however, possible to adapt the construction of \citet{Zaheer2017} to strengthen the result in two ways. Firstly, we can perform the sum-decomposition via $\mathbb{R}^M$:

\begin{restatable}[Fixed set size]{theorem}{oriunc}
\label{cor:ori_uncountable_theorem}
Let $f: \mathbb{R}^{M} \to \mathbb{R}$ be continuous. Then $f$ is permutation-invariant if and only if it is continuously sum-decomposable via $\mathbb{R}^M$.
\end{restatable}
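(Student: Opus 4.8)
The plan is to prove the two implications separately. The reverse direction is immediate: if $(\rho, \phi)$ is a sum-decomposition of $f$ via $\mathbb{R}^M$, then $\Sigma_{x \in X} \phi(x)$ is unchanged by reordering the inputs, so $f(\mathbf{x}) = \rho(\Sigma_{x \in X}\phi(x))$ is permutation-invariant. The content is in the forward direction, and my strategy is to adapt the construction of \citet{Zaheer2017} underlying \Cref{ori_uncountable_theorem}, trimming one latent coordinate. Their map has coordinates $\Phi_q(\mathbf{x}) = \Sigma_m (x_m)^q$ for $q = 0, \dots, M$; but at fixed set size the $q = 0$ coordinate is the constant $M$ and carries no information, so I would discard it and set $\phi: \mathbb{R} \to \mathbb{R}^M$ with $\phi_q(x) = x^q$ for $q = 1, \dots, M$. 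Then $\Phi(\mathbf{x}) = \Sigma_m \phi(x_m)$ has as its coordinates the power sums $p_1, \dots, p_M$.

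I would first check that $\Phi$ is injective up to permutation. Newton's identities are solvable over a field of characteristic zero, so the power sums $p_1, \dots, p_M$ determine the elementary symmetric polynomials $e_1, \dots, e_M$; these are the coefficients of $\prod_m (t - x_m)$ and hence determine the multiset $\{x_1, \dots, x_M\}$. Restricting $\Phi$ to the closed set $K = \{\mathbf{x} : x_1 \le \cdots \le x_M\}$ of sorted tuples therefore gives a continuous bijection onto its image $\Phi(K)$, which equals the whole image of $\Phi$ since every tuple can be sorted.

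The main obstacle, and the point at which the non-compact domain $\mathbb{R}^M$ forces a departure from the original proof, is establishing continuity of the inverse. On the compact cube $[0,1]^M$ this is free, since a continuous bijection out of a compact space into a Hausdorff space is automatically a homeomorphism; here that shortcut is unavailable. Instead I would show that $\Phi|_K$ is \emph{proper}: the coordinate $p_2 = \Sigma_m x_m^2$ bounds $\max_m |x_m|$, so preimages of bounded sets are bounded, and since $K$ is closed, preimages of compact sets are compact. A proper continuous map into the locally compact Hausdorff space $\mathbb{R}^M$ is closed, so $\Phi|_K$ is a closed continuous bijection onto $\Phi(K)$, hence a homeomorphism; in particular $\Phi(K)$ is closed and $\Phi^{-1}$ is continuous on it. (Equivalently, one could invoke the continuous dependence of the sorted roots of a monic polynomial on its coefficients.)

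Finally I would define $\rho = f \circ \Phi^{-1}$ on $\Phi(K)$. This is continuous as a composition of continuous maps, and $f(\mathbf{x}) = \rho(\Phi(\mathbf{x}))$ holds for every $\mathbf{x}$ because $f$ is permutation-invariant and so agrees with its value on the sorted representative returned by $\Phi^{-1}$. Since $\Phi(K)$ is closed in $\mathbb{R}^M$, the Tietze extension theorem extends $\rho$ to a continuous function on all of $\mathbb{R}^M$ (the extended values outside $\Phi(K)$ are never used), yielding a continuous sum-decomposition of $f$ via $\mathbb{R}^M$ and completing the proof.
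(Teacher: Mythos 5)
Your proof is correct, and it uses the same decomposition as the paper -- the truncated power-sum map $\phi_q(x) = x^q$, $q = 1, \dots, M$, obtained by discarding the constant $q=0$ coordinate of the map in \Cref{ori_uncountable_theorem} -- but it justifies the crucial homeomorphism property by a genuinely different route. The paper's proof works by relating the truncated map $\widetilde{\Phi}$ to the original $\Phi$ of \citet{Zaheer2017} through the explicit homeomorphism $\gamma(x_1,\dots,x_M) = (M, x_1,\dots,x_M)$ between $\text{Im}(\widetilde{\Phi})$ and $\text{Im}(\Phi) = \{M\} \times \text{Im}(\widetilde{\Phi})$, and then inherits the continuity of $\Phi^{-1}$ from the earlier result; but that earlier result is proved on the compact cube $[0,1]^M$, where a continuous injection is automatically a homeomorphism onto its image, while \Cref{cor:ori_uncountable_theorem} is stated for all of $\mathbb{R}^M$. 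You correctly identified this as the point where compactness is genuinely used, and your properness argument -- the coordinate $p_2 = \Sigma_m x_m^2$ bounds $\|\mathbf{x}\|$, so $\Phi$ restricted to the closed cone of sorted tuples pulls compact sets back to compact sets, hence is a closed map and a homeomorphism onto its closed image -- fills in exactly the step that the paper's reduction leaves implicit for the non-compact domain. Your final Tietze extension of $\rho$ from the closed set $\Phi(K)$ to all of $\mathbb{R}^M$ is likewise a refinement the paper omits (it defines $\rho = f \circ \widetilde{\Phi}^{-1}$ only on the image), and it buys a $\rho$ continuous on the whole latent space, which is the form one wants if $\rho$ is to be realised by a universal approximator. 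In short: same sum-decomposition, but your argument is self-contained on $\mathbb{R}^M$ and slightly stronger, whereas the paper's is shorter by leaning on the compact-domain construction of \citet{Zaheer2017}.
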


Secondly, we can deal with variable set sizes $\leq M$:

\begin{restatable}[Variable set size]{theorem}{arbitrary}
\label{thm:arbitrary_set_sizes}
    Let $f: \mathbb{R}^{\leq M} \to \mathbb{R}$ be continuous. Then $f$ is permutation-invariant if and only if it is continuously sum-decomposable via $\mathbb{R}^M$.
\end{restatable}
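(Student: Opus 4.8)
The ``if'' direction is immediate: $\Sigma_{x \in X}\phi(x)$ is unchanged under reordering, so any sum-decomposition is permutation-invariant. I therefore focus on the forward implication, following the strategy of \Cref{cor:ori_uncountable_theorem} and \citet{Zaheer2017}: exhibit a continuous, injective $\Phi$, set $\rho = f \circ \Phi^{-1}$ on its image, and extend $\rho$ to a continuous function on all of $\mathbb{R}^M$.

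The plan is to reuse the power-sum map, but composed with a bounded coordinate that stays away from the origin. Fix a homeomorphism $g : \mathbb{R} \to (1,2)$ and define $\phi(x) = (g(x), g(x)^2, \ldots, g(x)^M)$, so that $\Phi(X) = \Sigma_{x \in X}\phi(x)$ records the first $M$ power sums $p_1, \ldots, p_M$ of the values $\{g(x) : x \in X\} \subset (1,2)$. Both $\phi$ and $\Phi$ are continuous. The step that is genuinely new relative to the fixed-size case is injectivity across different cardinalities: if $\Phi(X) = \Phi(Y)$ with $|X| = a \le b = |Y| \le M$, I pad the multiset $g(X)$ with $b - a$ copies of $0$; since zeros leave every power sum unchanged, the padded multiset and $g(Y)$ are two multisets of equal size $b$ with identical power sums $p_1, \ldots, p_b$, hence equal (Newton's identities turn equal power sums into equal elementary symmetric polynomials, i.e. the same monic polynomial and hence the same multiset of roots). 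But $g(Y) \subset (1,2)$ contains no zero while the padded set does iff $a < b$; so $a = b$, whence $g(X) = g(Y)$ and $X = Y$ by injectivity of $g$. Choosing a range bounded away from $0$ is exactly what makes the cardinality recoverable, so that $M$ coordinates suffice where a naive count would seem to demand $M+1$.

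To invert continuously, I read $\Phi(X)$ as the power-sum vector of the size-$M$ multiset $g(X) \cup \{0^{\,M - |X|}\}$, apply the continuous coefficients-to-roots correspondence supplied by the fixed-size machinery to recover that multiset, and then discard its zero roots. This last step is continuous on $\operatorname{im}\Phi$ because the nonzero roots all lie in $(1,2)$ while the padding roots sit at $0$, leaving a fixed gap that no point of $\operatorname{im}\Phi$ can close; in particular a convergent sequence inside $\operatorname{im}\Phi$ cannot change the number of roots in $(1,2)$, so the cardinality is locally constant and $\Phi^{-1}$ is continuous across all strata at once. With $\Phi^{-1}$ continuous, $\rho_0 := f \circ \Phi^{-1}$ is continuous on $\operatorname{im}\Phi$, and I extend it to a continuous $\rho : \mathbb{R}^M \to \mathbb{R}$ by the Tietze extension theorem, giving $f(X) = \rho(\Sigma_{x \in X}\phi(x))$.

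The main obstacle is precisely this final extension, and with it the behaviour of $\operatorname{im}\Phi$ at its boundary. Unlike the fixed-size statement, where the relevant domain of multisets of size exactly $M$ has a closed power-sum image in $\mathbb{R}^M$ on which Tietze applies directly, here $\operatorname{im}\Phi$ need not be closed: a set can shed an element to $\pm\infty$ (so that $g(x)$ tends to an endpoint of $(1,2)$), or have two of its elements merge, and the corresponding limits of $\Phi(X)$ need not lie in $\operatorname{im}\Phi$. Making the Tietze step legitimate therefore requires showing that $\rho_0$ extends continuously to $\overline{\operatorname{im}\Phi}$, which is where the precise topology on $\mathbb{R}^{\leq M}$ does the work: continuity of $f$ in that topology must be strong enough to force $f(X_n)$ to converge whenever $\Phi(X_n)$ does, i.e. to tame exactly these degenerate and escaping configurations. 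Pinning down this compatibility, rather than the algebra of the power-sum map, is the crux of the proof.
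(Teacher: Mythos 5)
You have reproduced the paper's core algebraic device, essentially in disguise: the paper pads short sets with a reserved value $k \notin \mathfrak{X}$ and replaces $\phi_q$ by $\widehat{\phi}_q(x) = \phi_q(x) - \phi_q(k)$, i.e. it arranges precisely that the padding element has latent image $0$, which is what your $g : \mathbb{R} \to (1,2)$ with padding value $0$ achieves; your Newton's-identities argument for injectivity across cardinalities and your root-gap argument for local constancy of the set size are correct and play the role of the paper's injectivity of $\widehat{\Phi}$ and its disjoint strata. The genuine gap is the step you yourself defer as ``the crux'': extending $\rho_0 = f \circ \Phi^{-1}$ continuously beyond $\operatorname{im}\Phi$. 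This is not unfinished bookkeeping --- for your choice of $\phi$ it is false. Take $f(X) = \max(X)$, which is continuous and permutation-invariant on $\mathbb{R}^{\leq M}$, and $X_n = \{n\}$. Then $\Phi(X_n) = \bigl(g(n), g(n)^2, \dots, g(n)^M\bigr)$ converges to the finite point $(2, 4, \dots, 2^M)$ (taking $g$ increasing), while $f(X_n) = n \to \infty$. Any continuous $\rho$ would force $f(X_n) = \rho(\Phi(X_n)) \to \rho(2, 4, \dots, 2^M) < \infty$, a contradiction; so no continuous $\rho$ --- on $\mathbb{R}^M$ or even on $\overline{\operatorname{im}\Phi}$ --- can complete your $\phi$ into a sum-decomposition of $\max$. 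The bounded embedding $g$ is therefore not a harmless normalization: by compressing escaping configurations to finite limit points it makes $\operatorname{im}\Phi$ non-closed in exactly the way that kills the Tietze step (which in any case requires a closed domain).

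The paper's proof discharges this difficulty by compactness of the base space rather than by the gap between the padding value and the interval carrying the actual elements: it works with $\mathfrak{X} = [0,1]$ (its opening ``without loss of generality,'' which is itself terse about unbounded $f$ on $\mathbb{R}$, but the construction proper is carried out on a compact domain). Each stratum --- the image under $\widehat{\Phi}$ of sets of size exactly $m$ --- is then compact; there are finitely many, pairwise disjoint since $k \notin \mathfrak{X}$, so their union is closed and the stratum-wise inverses $\widehat{\Phi}_{m,M}^{-1}$ (padded back with $k$) glue into a single continuous $\widehat{\Phi}_C^{-1}$ on all of $\mathbb{R}^M$; the decomposition is completed as $\rho = \widehat{f}\circ\widehat{\Phi}_C^{-1}$ with $\widehat{f}$ a continuous extension of $f$ that ignores padding coordinates. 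Your separation of the strata does yield continuity of $\Phi^{-1}$ on $\operatorname{im}\Phi$ itself, as you argued, but it cannot control the boundary of $\operatorname{im}\Phi$. To repair the proposal you would need either to restrict to a compact base as the paper effectively does, or to choose $\phi$ proper, so that convergence of $\Phi(X_n)$ confines the $X_n$ to a compact family of configurations --- a property your bounded $g$ expressly rules out.
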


Note that we must take some care over the notion of continuity in this theorem -- see \Cref{sec:cont_set_fun_remark}.

\subsection{Discussion}

\Cref{thm:max_not_decomposable} does not imply \emph{all} functions require $N=M$. Some functions, such as the mean, can be represented in a lower dimensional space. The statement rather says that if we do not want to impose any limitations on the complexity of the function, the latent space needs to have dimensionality at least $M$.

\Cref{thm:arbitrary_set_sizes} suggests that sum-decomposition via a latent space with dimension $N=M$ should suffice to model any function.
Neural network models in the recent literature, however, deviate from these guidelines in several ways, indicating a disconnect between theory and practice.
For example, the models in \citet{Zaheer2017} and \citet{Qi2017} are considerably more complex than \Cref{eq:main}, e.g. they apply several permutation-equivariant layers to the input before a permutation-invariant layer.

In light of \Cref{thm:max_not_decomposable}, this disconnect becomes less surprising.
We have shown that, for a target function of sufficient complexity, $N=M$ is the bare minimum required for the model to be capable of representing the target function.
Achieving this would rely on the parameterisation of $\phi$ and $\rho$ being flexible enough and on the availability of a suitable optimisation method.
In practice, we should not be surprised that more than the bare minimum capacity in our model is required for good performance.
Even with $N>M$, the model might not converge to the desired solution.
At the same time, when we are dealing with real datasets, the training data may contain noise and redundant information, e.g. in the form of correlations between elements in the input, inducing functions of limited complexity that may in fact be representable with $N<M$.

\subsection{Illustrative Example}

We now use a toy example to illustrate some practical implications of our results.
Based on \Cref{thm:max_not_decomposable}, we expect the number of input elements $M$ to have an influence on the required latent dimension $N$, and in particular, we expect that the required latent dimension may increase without bound.

We train a neural network with the architecture presented in \Cref{fig:figure1} to predict the median of a set of values.
We choose the median as a function because it is relatively simple but cannot be trivially represented via a sum in a fixed-dimensional latent space, in contrast to e.g. the mean, which is sum-decomposable via $\mathbb{R}$.\footnote{The construction for $Z=\mathbb{R}$ is not entirely trivial for variable set size, but going via $Z=\mathbb{R}^2$ is straightforward.}
$\phi$ and $\rho$ are parameterised by multi-layer perceptrons (MLPs).
The input sets are randomly drawn from either a uniform, a Gaussian, or a Gamma distribution.

\begin{figure}[ht!]
\centering
\subfigure[Test performance on median estimation depending on latent dimension. Different colours depict different set sizes. Each data point is averaged over 500 runs with different seeds. Shaded areas indicate confidence intervals. Coloured dashed lines indicate $N=M$.]
{\label{fig:smooth_logxy}\includegraphics[width=70mm]{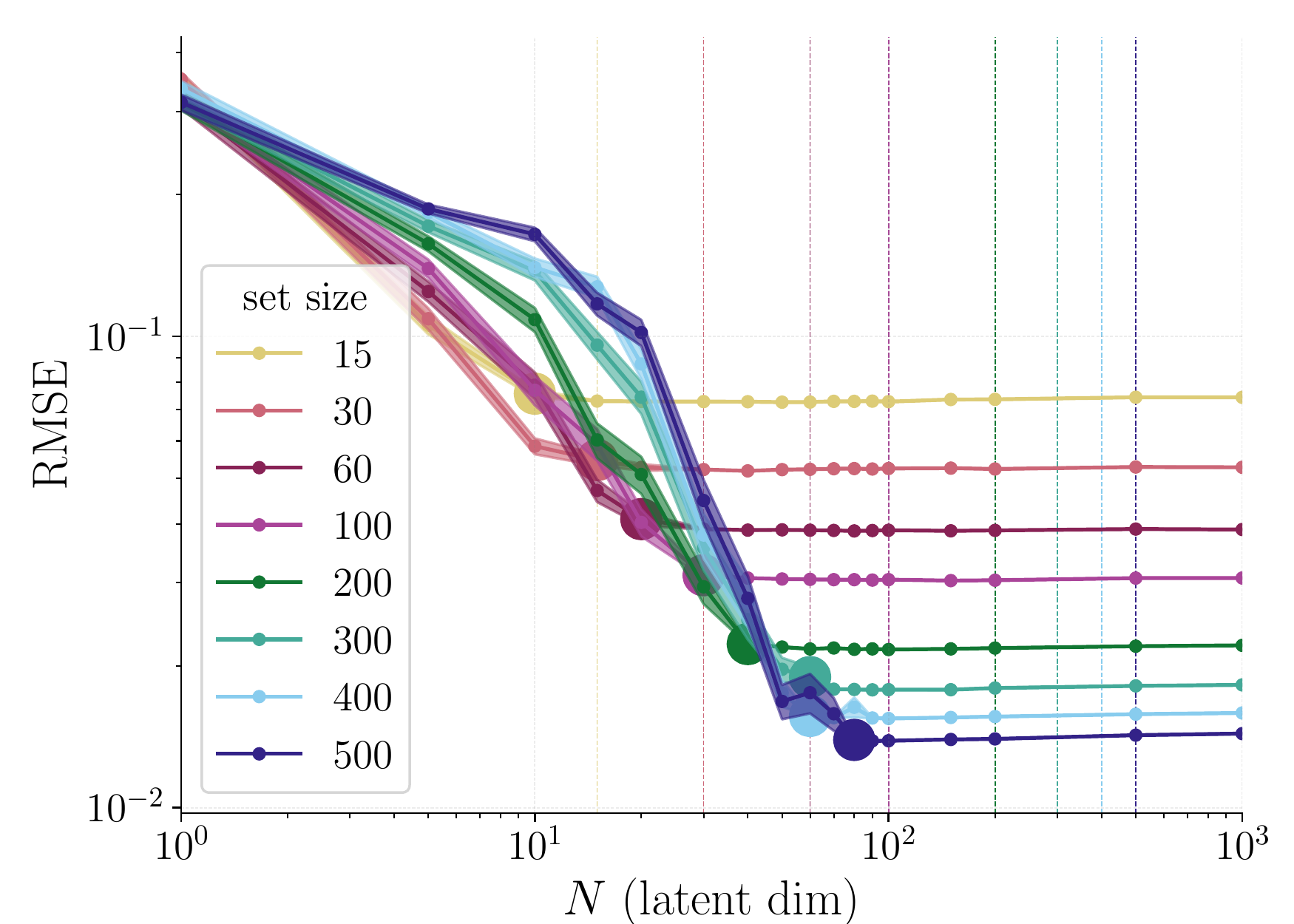}}
\subfigure[Extracted `critical points' from above graph. The coloured data points depict minimum latent dimension for optimal performance (RMSE less than $10\%$ above minimum value for this set size) for different set sizes.]{\label{fig:smooth_turning_points}\includegraphics[width=70mm]{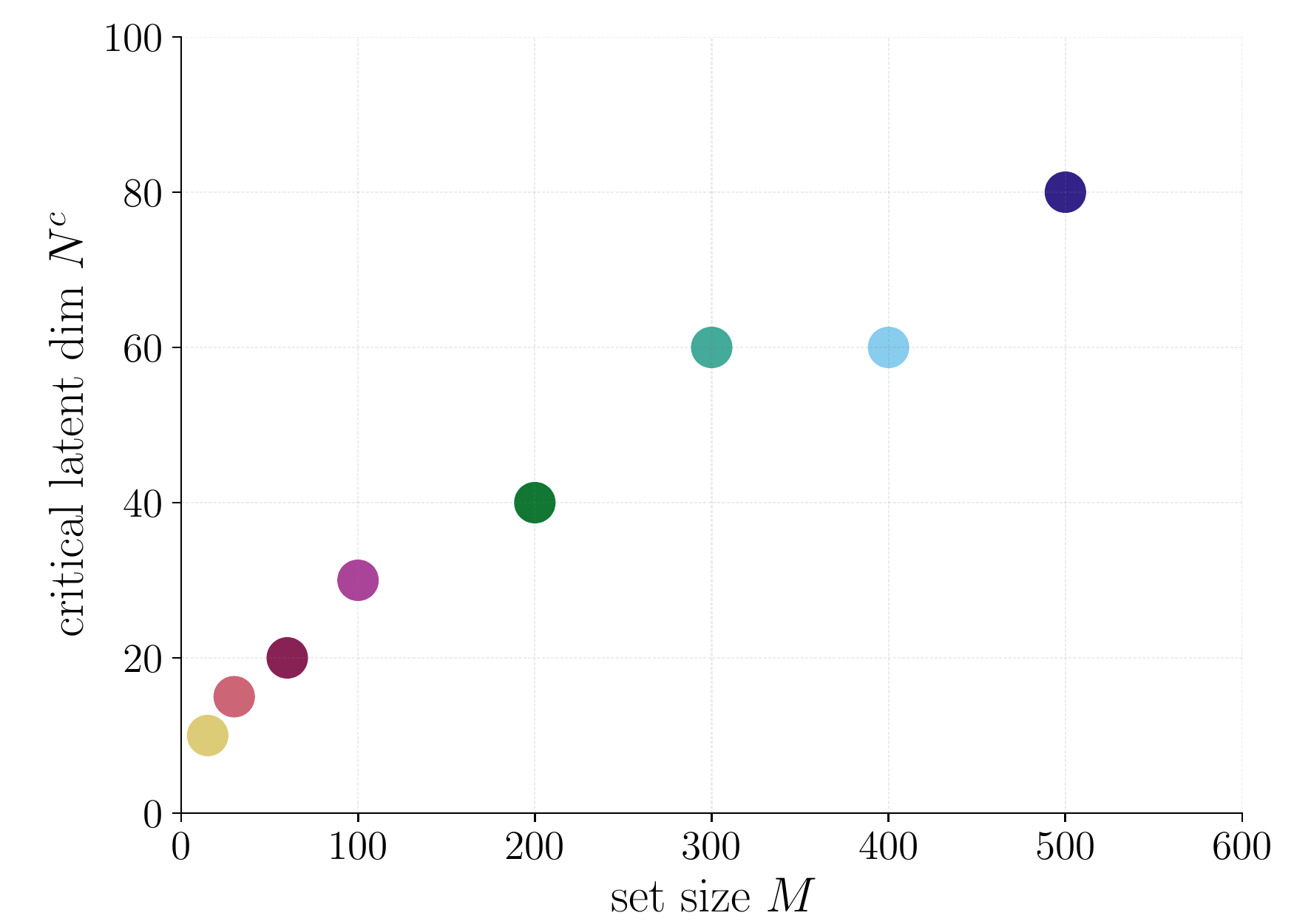}}
\vspace{-3mm}
\caption{Illustrative toy example: a neural network is trained to predict the median of an unordered set.
}
\end{figure}

We vary the latent dimension $N$ and the input set size $M$ to investigate the link between these two variables and the predictive performance.
The MLPs parameterising $\phi$ and $\rho$ are given comparatively many layers and hidden units, relative to the simplicity of the task, to ensure that the latent dimension is the bottleneck.
Further details are described in \Cref{sec:app_exp}.

\Cref{fig:smooth_logxy} shows the RMSE depending on the latent dimension for different input sizes.
We make three observations.
\begin{enumerate}
    \item For each set size, the error decreases monotonically with the dimension of the latent space.
    \item Beyond a certain point, increasing the dimension of the latent space does not further reduce the error. We denote this the ``critical point''.
    \item As the set size increases, so does the latent dimension at the critical point.
\end{enumerate}

\Cref{fig:smooth_turning_points} shows the critical points as a function of the input size, indicating a roughly linear relationship between the two.
Note that the critical points occur at $N<M$.
This can be explained by the fact that the models do not learn an algorithmic solution for computing the median, but rather to estimate it given samples drawn from the specific input distribution seen during training.
Furthermore, estimating the median of a distribution, like other functions, renders some information in the input redundant. Therefore, the mapping from input to latent space does not need to be injective, allowing a model to solve the task with a smaller value of $N$.

\section{Related Work}
\label{sec:related_work}

Much of the recent work on deep learning with unordered sets follows the paradigm discussed in \cite{ravanbakhsh2016deep}, \citet{Zaheer2017}, and \citet{Qi2017} which leverage the structure illustrated in \Cref{fig:figure1}.
\citet{Zaheer2017} provide an in-depth theoretical analysis which is discussed in detail in \Cref{sec:preliminaries}.
\citet{Qi2017} also derive a sufficiency condition for universal function approximation.
In their proof, however, they set the latent dimension $N$ to $\left\lceil 1 / \delta _ { \epsilon } \right\rceil$ where $\delta _ { \epsilon }$ depends on the error tolerance for how closely the target function has to be approximated.
As a result, the latent dimension $N$ goes to infinity for exact representation.
In similar vain, \citet{herzig2018mapping} consider permutation-invariant functions on graphs.

A key application domain of set-based methods is the processing of point clouds, as the constituent points do not have an intrinsic ordering.
The work by \citet{Qi2017} on 3D point clouds, one of the first to use a permutation-invariant neural networks, is extended in \citet{Qi2017a} by sampling and grouping points in a hierarchical fashion to model the interaction between nearby points in the input space more explicitly.
\citet{Qi2018} combine RGB and lidar data for object detection by using image detectors to generate bounding box proposals which are then further processed by a set-based model.
\citet{Achlioptas2018} and \citet{Cloud2018} show that set-based models can also be used to learn generative models of point clouds.

\citet{vinyals2015order} suggest that even though recurrent networks are universal approximators, the ordering of the input is crucial for good performance. Hence, they propose model that relies on attention to achieve permutation invariance in order to solve a sorting task.
In general, it is worth noting that there exists a connection between the model in \citet{Zaheer2017} and recent attention-based models such as the one proposed in \citet{Vaswani2017}.
In this case, the aggregation layer includes a weighting parameter which is computed based on a key-query system which is also permutation invariant.
Since the value of the weighting parameters could be learned to be $1.0$, it is trivial to show that such an attention algorithm is also in principle able to approximate any permutation-invariant function, of course depending on the remaining parts of the architecture.
Inspired by inducing point methods, Set Transformer \citep{Lee2018} propose a computationally more efficient attention-module and demonstrate better performance on a range of set-based tasks.
While stacking several of attention-modules can capture higher order dependencies, a more general treatment of this is offered by permutation-invariant, learnable Janossy Pooling \citep{Murphy2018}.

Similar to the methods considered here, Neural Processes \citep{Garnelo2018a} and Conditional Neural Processes \citep{Garnelo2018} also rely on aggregation via summation in order to infer a distribution from a set of data points.
\citet{kim2019attentive} add an attention mechanism to neural processes to improve empirical performance.
Generative Query Networks \citep{Eslami2018, Kumar2018} can be regarded as an instantiation of neural processes to learn useful representations of 3D scenes from multiple 2D views.
\citet{Yang2018} also aggregate information from multiple views to compute representations of 3D objects.

\citet{bloem2019probabilistic} and \citet{Korshunova2018} consider exchangeable sequences -- sequences consisting of random variables with a joint likelihood which is invariant under permutations.
\citet{bloem2019probabilistic} provide a theorem that describes distribution-invariant models.
\citet{Korshunova2018} use RealNVP \citep{dinh2016density} as a bijective function which sequentially computes the parameters of a Student-t process.

\section{Conclusions}
\label{sec:conclusions}

This work derives theoretical limitations on the representation of \emph{arbitrary} functions on sets via a finite latent space.
We demonstrate why continuity requires statements on uncountable domains, as opposed to countable domains, to ensure the practical usefulness of those statements.
Under this constraint, we prove that a latent space whose dimension is at least as large as the maximum input set size is both sufficient and necessary to achieve \emph{universal} function representation.
The models covered in this analysis are popular for a range of practical applications and can be implemented e.g. by neural networks or Gaussian processes.
In future work, we would like to investigate the effect of constructing models with both permutation-equivariant and permutation-invariant modules on the required dimension of the latent space.
Examining the implications of using self-attention, e.g. as in \citet{Lee2018}, would be of similar interest.

\section*{Acknowledgements}
This research was funded by the EPSRC AIMS Centre for Doctoral Training at the University of Oxford, an EPSRC DTA studentship, a Google studentship, and an EPSRC Programme Grant (EP/M019918/1).
The authors acknowledge use of Hartree Centre resources in this work. The STFC Hartree Centre is a research collaboratory in association with IBM providing High Performance Computing platforms funded by the UK’s investment in e-Infrastructure.
The authors thank Sudhanshu Kasewa and Olga Isupova for proof reading a draft of the paper.

\bibliography{references}
\bibliographystyle{icml2019}

\clearpage
\appendix
\section{Mathematical Remarks}

\subsection{Infinite Sums}
\label{sec:infinite_sums}

Throughout this paper we consider expressions of the following form:

\begin{equation}
\label{eq:setsum}
\Phi(X) = \Sigma_{x \in X} \phi(x)
\end{equation}

Where $X$ is an arbitrary set. The meaning of this expression is clear when $X$ is finite, but when $X$ is infinite, we must be precise about what we mean.

\subsubsection{Countable Sums}

We usually denote countable sums as e.g. $\Sigma_{i=1}^\infty x_i$. Note that there is an ordering of the $x_i$ here, whereas there is no ordering in our expression \eqref{eq:setsum}. The reason that we consider sums is for their permutation invariance in the finite case, but note that in the infinite case, permutation invariance of sums does not necessarily hold! For instance, the alternating harmonic series $\Sigma_{i=1}^\infty \frac{(-1)^i}{i}$ can be made to converge to any real number simply by reordering the terms of the sum. For expressions like \eqref{eq:setsum} to make sense, we must require that the sums in question are indeed permutation invariant. This property is known as \emph{absolute convergence}, and it is equivalent to the property that the sum of absolute values of the series converges. So for \eqref{eq:setsum} to make sense, we will require everywhere that $\Sigma_{x \in X} |\phi(x)|$ is convergent. For any $X$ where this is not the case, we will set $\Phi(X) = \infty$.

\subsubsection{Uncountable Sums}

It is well known that a sum over an uncountable set of elements only converges if all but countably many elements are 0. Allowing sums over uncountable sets is therefore of little interest, since it essentially reduces to the countable case.

\subsection{Continuity of Functions on Sets}
\label{sec:cont_set_fun_remark}

We are interested in functions on subsets of $\mathbb{R}$, i.e. elements of $2^\mathbb{R}$, and the notion of continuity on $2^\mathbb{R}$ is not straightforward. As a convenient shorthand, we discuss ``continuous'' functions $f$ on $2^\mathbb{R}$, but what we mean by this is that the function $f_M$ induced by $f$ on $\mathbb{R}^M$ by $f_N(x_1, ..., x_M) = f(\{x_1, ..., x_M\})$ is continuous for every $M \in \mathbb{N}$.

\subsection{Remark on \Cref{ori_countable_theorem}}
\label{sec:multisets}

The proof for \Cref{ori_countable_theorem} from \citet{Zaheer2017} can be extended to dealing with multi sets, i.e. sets with repeated elements. To that end, we replace the mapping to natural numbers $c(X): \mathbb{R}^M \to \mathbb{N}$ with a mapping to prime numbers $p(X): \mathbb{R}^M \to \mathbb{P}$. We then choose $\phi(x_m) = -\log p(x_m)$. Therefore,
\begin{equation}
\Phi(X) = \sum_{m=1}^{M} \phi (x_m) = \log \prod_{m=1}^{M} \frac{1}{p(x_m)}
\end{equation}
which takes a unique value for each distinct $X$ therefore extending the validity of the proof to multi-sets. However, unlike the original series, this choice of $\phi$ diverges with infinite set size.

In fact, it is straightforward to show that there is no function $\phi$ for which $\Phi$ provides a unique mapping for arbitrary multi-sets while at same time guaranteeing convergence for infinitely large sets. Assume a function $\phi$ and an arbitrary point $x$ such that $\phi(x) = a \neq 0$. Then, the multiset comprising infinitely many identical members $x$ would give: 
\begin{equation}
\Phi(X) = \sum_{i=1}^{\infty} \phi (x_m) = \sum_{i=1}^{\infty} a = \pm\infty
\end{equation}

\section{Proofs of Theorems}
\label{sec:proofs}

\subsection{\Cref{thm:q_discontinuous}}

\qdiscontinuous*

\begin{proof}
Consider $f(X) = \text{sup}(X)$, the least upper bound of $X$. Write $\Phi(X) = \Sigma_{x \in X} \phi(x)$. So we have:

\begin{displaymath}
\text{sup}(X) = \rho(\Phi(X))
\end{displaymath}

First note that $\phi(q) \neq 0$ for any $q \in \mathbb{Q}$. If we had $\phi(q) = 0$, then we would have, for every $X \subset \mathbb{Q}$:

\begin{displaymath}
\Phi(X) = \Phi(X) + \phi(q) = \Phi(X \cup \{q\})
\end{displaymath}

But then, for instance, we would have: 

\begin{displaymath}
q = \text{sup}(\{q-1, q\}) = \text{sup}(\{q-1\}) = q-1
\end{displaymath}

This is a contradiction, so $\phi(q) \neq 0$.

Next, note that $\Phi(X)$ must be finite for every upper-bounded $X \subset \mathbb{Q}$ (since sup is undefined for unbounded $X$, we do not consider such sets, and may allow $\Phi$ to diverge). Even if we allowed the domain of $\rho$ to be $\mathbb{R} \cup \{\infty\}$, suppose $\Phi(X) = \infty$ for some upper-bounded set $X$. Then:

\begin{eqnarray*}
\text{sup}(X) & = & \rho(\Phi(X)) \\
& = & \rho(\infty) \\
& = & \rho(\infty + \phi(\text{sup(X) + 1})) \\
& = & \rho(\Phi(X \cup \{sup(X) + 1\})) \\
& = & \text{sup}(X \cup \{sup(X) + 1\}) \\
& = & sup(X) + 1
\end{eqnarray*}

This is a contradiction, so $\Phi(X) < \infty$ for any upper-bounded set $X$.

Now from the above it is immediate that, for any upper-bounded set $X$, only finitely many $x \in X$ can have $\phi(x) > \frac{1}{n}$. Otherwise we can find an infinite upper-bounded set $Y \subset X$ with $\phi(y) > \frac{1}{n}$ for every $y \in Y$, and $\Phi(Y) = \infty$.

Finally, let $q \in \mathbb{Q}$. We have already shown that $\phi(q) \neq 0$, and we will now construct a sequence $q_n$ with:

\begin{enumerate}
    \item $q_n \to q$
    \item $\phi(q_n) \to 0$
\end{enumerate}

If $\phi$ were continuous at $q$, we would have $\phi(q_n) \to \phi(q)$, so the above two points together will give us that $\phi$ is discontinuous at $q$. 

So now, for each $n \in \mathbb{N}$, consider the set $B_n$ of points which lie within $\frac{1}{n}$ of $q$. Since only finitely many points $p \in B_n$ have $\phi(p) > \frac{1}{n}$, and $B_n$ is infinite, there must be a point $q_n \in B_n$ with $\phi(q_n) < \frac{1}{n}$. The sequence of such $q_n$ clearly satisfies both points above, and so $\phi$ is discontinuous everywhere.
\end{proof}

\subsection{\Cref{thm:uncountable_discontinuous}}

\uncdiscontinuous*

\begin{proof}
Define $\Phi : \mathbb{R}^{\mathcal{F}} \to \mathbb{R}$ by $\Phi(X) = \Sigma_{x \in X}\phi(x)$. If we can demonstrate that there exists some $\phi$ such that $\Phi$ is injective, then we can simply choose $\rho = f \circ \Phi^{-1}$ and the result is proved.

Say that a set $X \subset \mathbb{R}$ is \emph{finite-sum-distinct} if, for any finite subsets $A, B \subset X$, $\Sigma_{a \in A}a \neq \Sigma_{b \in B}b$. Now, if we can show that there is a finite-sum-distinct set $D$ with the same cardinality as $\mathbb{R}$ (we denote $|\mathbb{R}|$ by $\mathfrak{c}$), then we can simply choose $\phi$ to be a bijection from $\mathbb{R}$ to $D$. Then, by finite-sum-distinctness, $\Phi$ will be injective, and the result is proved.

Now recall the statement of Zorn's Lemma: suppose $\mathcal{P}$ is a partially ordered set (or \emph{poset}) in which every totally ordered subset has an upper bound. Then $\mathcal{P}$ has a maximal element.

The set of f.s.d. subsets of $\mathbb{R}$ (which we will denote $\mathcal{D}$) forms a poset ordered by inclusion. Supposing that $\mathcal{D}$ satisfies the conditions of Zorn's Lemma, it must have a maximal element, i.e. there is a f.s.d. set $D_\text{max}$ such that any set $E$ with $D_\text{max} \subsetneq E$ is not f.s.d. We claim that $D_\text{max}$ has cardinality $\mathfrak{c}$. 

To see this, let $D$ be a f.s.d. set with infinite cardinality $\kappa < \mathfrak{c}$ (any maximal $D$ clearly cannot be finite). We will show that $D \neq D_\text{max}$. Define the \emph{forbidden elements} with respect to $D$ to be those elements $x$ of $\mathbb{R}$ such that $D \cup \{x\}$ is not f.s.d. We denote this set of forbidden elements $F_D$. Now note that, if $D$ is maximal, then $D \cup F_D = \mathbb{R}$. In particular, this implies that $|F_D|=\mathfrak{c}$. But now consider the elements of $F_D$. By definition of $F_D$, we have that $x \in F_D$ if and only if $\exists c_1, ..., c_m, d_1, ..., d_n \in D$ such that $c_1 + ... + c_m + x = d_1 + ... + d_n$. So we can write $x$ as a sum of finitely many elements of $D$, minus a sum of finitely many other elements of $D$. So there is a surjection from pairs of finite sets of $D$ to elements of $F_D$. i.e.:

\begin{displaymath}
|F_D| \leq |D^{\mathcal{F}} \times D^{\mathcal{F}}|
\end{displaymath}

But since $D$ is infinite:

\begin{displaymath}
|D^{\mathcal{F}} \times D^{\mathcal{F}}| = |D| = \kappa < \mathfrak{c}
\end{displaymath}

So $|F_D| < \mathfrak{c}$, and therefore $|D|$ is not maximal. This demonstrates that $D_\text{max}$ must have cardinality $\mathfrak{c}$.

To complete the proof, it remains to show that $\mathcal{D}$ satisfies the conditions of Zorn's Lemma, i.e. that every totally ordered subset (or \emph{chain}) $\mathcal{C}$ of $\mathcal{D}$ has an upper bound. So consider: 

\begin{displaymath}
C_\text{ub} = \bigcup \mathcal{C} = \bigcup_{C\in\mathcal{C}} C
\end{displaymath}

We claim that $C_\text{ub}$ is an upper bound for $\mathcal{C}$. It is clear that $C \subset C_\text{ub}$ for every $C \in \mathcal{C}$, so it remains to be shown that $C_\text{ub} \in \mathcal{D}$, i.e. that $C_\text{ub}$ is f.s.d.

We proceed by contradiction. Suppose that $C_\text{ub}$ is not f.s.d. Then: 

\begin{equation}
\label{eq:notfsd}
\exists c_1, ..., c_m, d_1, ..., d_n \in C_\text{ub} : \Sigma_i c_i = \Sigma_j d_j
\end{equation}

But now by construction of $C_\text{ub}$ there must be sets $C_1, ..., C_m, D_1, ..., D_m \in \mathcal{C}$ with $c_i \in C_i, d_j \in D_j$. Let $\mathcal{B} = \{C_i\}_{i=1}^m \cup \{D_j\}_{j=1}^n$. $\mathcal{B}$ is totally ordered by inclusion and all sets contained in it are f.s.d., since it is a subset of $\mathcal{C}$. Since $\mathcal{B}$ is finite it has a maximal element $B_\text{max}$. By maximality, we have $c_i, d_j \in B_\text{max}$ for all $c_i, d_j$. But then by \eqref{eq:notfsd}, $B_\text{max}$ is not f.s.d., which is a contradiction. So we have that $C_\text{ub}$ is f.s.d. 

In summary: 

\begin{enumerate}
\item $\mathcal{D}$ satisfies the conditions of Zorn's Lemma.
\item Therefore there exists a maximal f.s.d. set, $D_\text{max}$.
\item We have shown that any such set must have cardinality $\mathfrak{c}$.
\item Given an f.s.d. set $D_\text{max}$ with cardinality $\mathfrak{c}$, we can choose $\phi$ to be a bijection between $\mathbb{R}$ and $D_\text{max}$.
\item Given such a $\phi$, we have that $\Phi(X)=\Sigma_{x \in X} \phi(x)$ is injective on $R^{\mathcal{F}}$.
\item Given injective $\Phi$, choose $\rho = f \circ \Phi^{-1}$.
\item This choice gives us $f(X) = \rho(\Sigma_{x \in X}\phi(x))$ by construction.
\end{enumerate}

This completes the proof.
\end{proof}

\subsection{\Cref{thm:uncountable_only_finite_subsets}}
\uncfinite*

\begin{proof}
Consider $f(X) = \text{sup}(X)$.

As discussed above, a sum over uncountably many elements can converge only if countably many elements are non-zero. But as in the proof of \Cref{thm:q_discontinuous}, $\phi(x) \neq 0$ for any $x$. So it is immediate that sum-decomposition is not possible for functions operating on uncountable subsets of $\mathfrak{X}$.

Even restricting to countable subsets is not enough. As in the proof of \Cref{thm:q_discontinuous}, we must have that for each $n \in \mathbb{N}$, $\phi(x) > \frac{1}{n}$ for only finitely many $x$. But then if this is the case, let $\mathfrak{X}_n$ be the set of all $x \in \mathfrak{X}$ with $\phi(x) > \frac{1}{n}$. Since $\phi(x) \neq 0$, we know that $\mathfrak{X} = \bigcup \mathfrak{X}_n$. But this is a countable union of finite sets, which is impossible because $\mathfrak{X}$ is uncountable.

\end{proof}

\subsection{\Cref{cor:ori_uncountable_theorem}}

\oriunc*

\begin{proof}
The reverse implication is clear. The proof relies on demonstrating that the function $\Phi \to \mathbb{R}^{M+1}$ defined as follows is a homeomorphism onto its image:

\begin{gather*}
\Phi_q(X) = \sum_{m=1}^{M} \phi_q(x_m), \quad q = 0,\dots,M \\
\phi_q(x) = x^q, \quad q = 0,\dots,M
\end{gather*}

Now define $\widetilde{\Phi} \to \mathbb{R}^M$ by:

\begin{gather*}
\widetilde{\Phi}_q(X) = \sum_{m=1}^{M} \widetilde{\phi}_q(x_m), \quad q = 1,\dots,M \\
\widetilde{\phi}_q(x) = x^q, \quad q = 1,\dots,M
\end{gather*}

Note that $\Phi_0(X)=M$ for all $X$, so $\text{Im}(\Phi) = \{M\} \times \text{Im}(\widetilde{\Phi})$. Since $\{M\}$ is a singleton, these two images are homeomorphic, with a homeomorphism given by:

\begin{gather*}
\gamma : \text{Im}(\widetilde{\Phi}) \to \text{Im}(\Phi) \\
\gamma(x_1,\dots,x_M) = (M, x_1, \dots, x_M)
\end{gather*}

Now by definition, $\widetilde{\Phi} = \gamma^{-1} \circ \Phi$. Since this is a composition of homeomorphisms, $\widetilde{\Phi}$ is also a homeomorphism. Therefore $(f \circ \widetilde{\Phi}^{-1}, \widetilde{\phi})$ is a continuous sum-decomposition of $f$ via $\mathbb{R}^M$. 
\end{proof}

\subsection{\Cref{thm:arbitrary_set_sizes}}

\arbitrary*

\begin{proof}
We use the adapted sum-of-power mapping $\widetilde{\Phi}$ from above, denoted in this section by $\Phi$.
\begin{gather*}
\Phi_q(X) = \sum_{m=1}^{M} \phi_q(x_m), \quad q = 1,\dots,M \\
\phi_q(x_m) = (x_m)^q, \quad q = 1,\dots,M
\end{gather*}
which is shown above to be injective. Without loss of generality, let $\mathfrak{X} = [0,1]$ as in \Cref{ori_uncountable_theorem}.

We separate $\Phi_q(X)$ into two terms:
\begin{equation}
    \Phi_q(X) = \sum_{m=1}^{M'} \phi_q(x_m) + \sum_{m=M'+1}^{M} \phi_q(x_m)
\end{equation}
For an input set $X$ with $M'=M-P$ elements and $0 \leq M', P \leq M$, we say that the set contains $M'$ ``actual elements'' as well as $P$ ``empty" elements which are not in fact part of the input set.
Those $P$ ``empty elements'' can be regarded as place fillers when the size of the input set is smaller than $M$, i.e. $M' < M$.

We map those $P$ elements to a constant value $k \notin \mathfrak{X}$, preserving the injectiveness of $\Phi_q(X)$ for input sets $X$ of arbitrary size $M'$:
\begin{equation}
\label{eq:split_e}
    \Phi_q(X) = \sum_{m=1}^{M'} \phi_q(x_m) + \sum_{m=M'+1}^{M} \phi_q(k)
\end{equation}

\Cref{eq:split_e} is no longer strictly speaking a sum-decomposition.
This can be overcome by re-arranging it:
\begin{equation}
\label{eq:split_e_rearranged}
    \begin{split}
    \Phi_q(X) & = \sum_{m=1}^{M'} \phi_q(x_{m}) + \sum_{m=M'+1}^{M} \phi_q(k) \\
           & = \sum_{m=1}^{M'} \phi_q(x_{m}) + \sum_{m=1}^{M} \phi_q(k) - \sum_{m=1}^{M'} \phi_q(k) \\
           & = \sum_{m=1}^{M'} \left[ \phi_q(x_m) - \phi_q(k) \right] + \sum_{m=1}^{M} \phi_q(k)
    \end{split}
\end{equation}

The last term in \Cref{eq:split_e_rearranged} is a constant value which only depends on the choice of $k$ and is independent of $X$ and $M'$.
Hence, we can replace $\phi_q(x)$ by $\widehat{\phi_q}(x)=\phi_q(x)-\phi_q(k)$.
This leads to a new sum-of-power mapping $\widehat{\Phi}_q(X)$ with:
\begin{equation}
\label{eq:new_e}
    \begin{split}
        \widehat{\Phi}_q(X) & = \sum_{m=1}^{M'} \widehat{\phi}_q(x_m)  \\
        & = \Phi_q(X) - M\cdot\phi_q(k)
    \end{split}
\end{equation}

$\widehat{\Phi}$ is injective since $\Phi$ is injective, $k \notin \mathfrak{X}$, and the last term in the above sum is constant. $\widehat{\Phi}$ is also in the form of a sum-decomposition.

For each $m < M$, we can follow the reasoning used in the rest of the proof of \Cref{ori_uncountable_theorem} to note that $\widehat{\Phi}$ is a homeomorphism when restricted to sets of size $m$ -- we denote these restricted functions by $\widehat{\Phi}_m$.
Now each $\widehat{\Phi}_m^{-1}$ is a continuous function into $\mathbb{R}^m$. We can associate with each a continuous function $\widehat{\Phi}_{m,M}^{-1}$ which maps into $\mathbb{R}^M$, with the $M-m$ tailing dimensions filled with the value $k$.

Now the domains of the $\widehat{\Phi}_{m,M}^{-1}$ are compact and disjoint since $k \notin \mathfrak{X}$. We can therefore find a function $\widehat{\Phi}_{C}^{-1}$ which is continuous on $\mathbb{R}^N$ and agrees with each $\widehat{\Phi}_{m,M}^{-1}$ on its domain.

To complete the proof, let $\mathcal{Y}$ be a connected compact set with $k\in\mathcal{Y}, \mathfrak{X}\subset\mathcal{Y}$. Let $\widehat{f}$ be a function on subsets of $\mathcal{Y}$ of size exactly $M$ satisfying:

\begin{gather*}
    \widehat{f}(X) = f(X); \quad X \subset \mathfrak{X} \\
    \widehat{f}(X) = f(X \cap \mathfrak{X}); \quad X \subset \mathfrak{X} \cup \{k\}
\end{gather*}

We can choose $\widehat{f}$ to be continuous under the notion of continuity in \Cref{sec:cont_set_fun_remark}.
Then $(\widehat{f}\circ\widehat{\Phi}_{C}^{-1}, \widehat{\phi})$ is a continuous sum-decomposition of $f$.

\end{proof}

\subsection{Max-Decomposition}
\label{app:max-decomp}

Analogously to sum-decomposition, we define the notion of \emph{max-decomposition}. A function $f$ is max-decomposable if there are functions $\rho$ and $\phi$ such that:

\begin{displaymath}
f(\mathbf{x}) = \rho \bigl( \text{max}_i ( \phi(x_i) ) \bigr).
\end{displaymath}

where the max is taken over each dimension independently in the latent space. Our definitions of decomposability via $Z$ and continuous decomposability also extend to the notion of max-decomposition.

We now state and prove a theorem which is closely related to \Cref{thm:max_not_decomposable}, but which establishes limitations on max-decomposition, rather than sum-decomposition.

\begin{theorem}
\label{thm:sum_not_decomposable}
Let $M > N \in \mathbb{N}$. Then there exist permutation invariant continuous functions $f : \mathbb{R}^M \to \mathbb{R}$ which are not max-decomposable via $\mathbb{R}^N$.
\end{theorem}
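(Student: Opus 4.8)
The plan is to exhibit a single, very simple witness function and show it cannot factor through a max-pooling into $\mathbb{R}^N$. Concretely, I would take $f(X) = \sum_{x \in X} x$, the sum of the set's elements, which is manifestly continuous and permutation-invariant on $\mathbb{R}^M$. Assume for contradiction that $f$ is max-decomposable via $\mathbb{R}^N$, so that $f(X) = \rho\bigl(\max_{x \in X} \phi(x)\bigr)$ for some $\phi : \mathbb{R} \to \mathbb{R}^N$ and $\rho : \mathbb{R}^N \to \mathbb{R}$, the max being taken coordinate-wise. Writing $\Psi(X) = \max_{x \in X} \phi(x)$, it then suffices to produce two \emph{distinct} sets $A, B$ of size $M$ with $\Psi(A) = \Psi(B)$ but $f(A) \neq f(B)$, since this directly contradicts $f = \rho \circ \Psi$.

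The heart of the argument is a pigeonhole observation about which elements of a set can influence a coordinate-wise maximum. I would pick any $N+2$ distinct reals $t_0, \dots, t_{N+1}$. For each latent coordinate $j \in \{1, \dots, N\}$, let $r_j$ be an index attaining $\max_i \phi_j(t_i)$. The index set $\{r_1, \dots, r_N\}$ has at most $N$ elements, so among the $N+2$ available indices at least two, say $p \neq q$, are never selected; that is, $t_p$ and $t_q$ are dominated in every coordinate by some other point. Consequently, deleting either one from the full collection leaves every coordinate maximum unchanged (each remains equal to the maximum over all $N+2$ points), so the sets $X = \{t_i : i \neq q\}$ and $Y = \{t_i : i \neq p\}$ satisfy $\Psi(X) = \Psi(Y)$. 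I would then pad both with a common set $W$ of $M - N - 1$ further reals disjoint from the $t_i$ (this is non-negative since $M > N$, and empty when $M = N+1$), obtaining distinct size-$M$ sets $A = X \cup W$ and $B = Y \cup W$; since the coordinate-wise max of a union is the coordinate-wise max of its parts, $\Psi(A) = \Psi(B)$ continues to hold.

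Finally I would verify that $f$ separates this pair. Because $A$ and $B$ arise from the same ground collection by omitting $t_q$ versus $t_p$ respectively, we get $f(A) - f(B) = t_p - t_q \neq 0$, contradicting $f(A) = f(B)$. I expect the main obstacle to be conceptual rather than computational: unlike the injectivity route used for \Cref{thm:max_not_decomposable}, one cannot force $\Psi$ to be injective here, since a continuous injection $\mathbb{R}^M \to \mathbb{R}^N$ is already impossible by invariance of domain, so max-pooling is \emph{never} injective when $M > N$. The real difficulty is therefore committing to a \emph{fixed} witness $f$ that fails uniformly over all $\phi$, even though the colliding pair $A, B$ depends on $\phi$. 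Choosing $f$ to be the sum resolves exactly this, because the pigeonhole construction always produces an $A, B$ differing by a single swapped element and the sum detects any such difference. Note that the argument uses no continuity of $\phi$ or $\rho$, so it in fact establishes the stronger conclusion that $f$ is not max-decomposable via $\mathbb{R}^N$ even by discontinuous maps.
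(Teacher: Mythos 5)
Your proposal is correct and takes essentially the same approach as the paper's proof: the same witness $f(\mathbf{x}) = \sum_i x_i$, the same pigeonhole argument on which indices attain the coordinate-wise maxima of $\phi$, and the same observation that no continuity of $\phi$ or $\rho$ is needed. The only cosmetic difference is the construction of the colliding pair --- you take $N+2$ points, find \emph{two} never-maximal indices, delete one from each copy and pad with a common set of distinct reals, whereas the paper takes $M$ points, finds \emph{one} never-maximal index $m$, and replaces $x_m$ with a duplicate of a maximising element; both yield inputs with equal coordinate-wise max but different sums.
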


Note that this theorem rules out any max-decomposition, whether continuous or discontinuous. We specifically demonstrate that summation is not max-decomposable -- as with \Cref{thm:max_not_decomposable}, this theorem applies to ordinary well-behaved functions.

\begin{proof}
Consider $f(\mathbf{x}) = \sum_{i=1}^M x_m$. Let $\phi : \mathbb{R} \to \mathbb{R}^N$, and let $\mathbf{x} \in \mathbb{R}^M$ such that $x_i \neq x_j$ when $i \neq j$.

For $n=1,\dots,N$, let $\mu(n) \in \{1,\dots,M\}$ such that:

\begin{displaymath}
\text{max}_i ( \phi(x_i)_n ) = \phi(x_{\mu(n)})_n
\end{displaymath}

That is, $\phi(x_{\mu(q)})$ attains the maximal value in the $q$-th dimension of the latent space among all $\phi(x_i)$. Now since $N < M$, there is some $m \in \{1,\dots,M\}$ such that $\mu(n) \neq m$ for any $n \in \{1,\dots,N\}$. So now consider $\widetilde{\mathbf{x}}$ defined by:

\begin{gather}
\widetilde{x}_i = x_i ; i \neq m \\
\widetilde{x}_m = x_{\mu(1)}
\end{gather}

Then:

\begin{displaymath}
\text{max}_i ( \phi(x_i) ) = \text{max}_i ( \phi(\widetilde{x}_i) )
\end{displaymath}

But since we chose $\mathbf{x}$ such that all $x_i$ were distinct, we have $\sum_{i=1}^M x_i \neq \sum_{i=1}^M \widetilde{x}_i$ by the definition of $\widetilde{\mathbf{x}}$. This shows that $\phi$ cannot form part of a max-decomposition for $f$. But $\phi$ was arbitrary, so no max-decomposition exists.

\end{proof}

\section{A Continuous Function on $\mathbb{Q}$}
\label{sec:function_on_q}

This section defines and analyses the function $\Psi$ shown in \Cref{fig:q_continuous}, which is continuous on $\mathbb{Q}$ but not on $\mathbb{R}$. $\Psi$ is defined as the pointwise limit of a sequence of functions $\Psi_n$, illustrated in \Cref{fig:q_continuous_progression}. We proceed as follows:

\begin{enumerate}
    \item Define a sequence of functions $\widetilde{\Psi}_n$ on $[0,1]$.
    \item Show that the pointwise limit $\widetilde{\Psi}$ is continuous except at points of the form $k\cdot 2^{-m}$ for some integers $k$ and $m$, i.e. except at the \emph{dyadic rationals}.
    \item Define the function $\Psi$ on $[0, A]$ by $\Psi(x) = \widetilde{\Psi}(\frac{x}{A})$.
    \item Note that $\Psi$ is continuous except at points of the form $A\cdot k\cdot 2^{-m}$ for some integers $k$ and $m$.
    \item Choose $A$ to be irrational, so that all points of discontinuity are also irrational, to obtain a function which is continuous on $\mathbb{Q}$. (In all figures, we have chosen $A=\text{log}(4)$).
\end{enumerate}

\begin{figure}[ht!]
\input{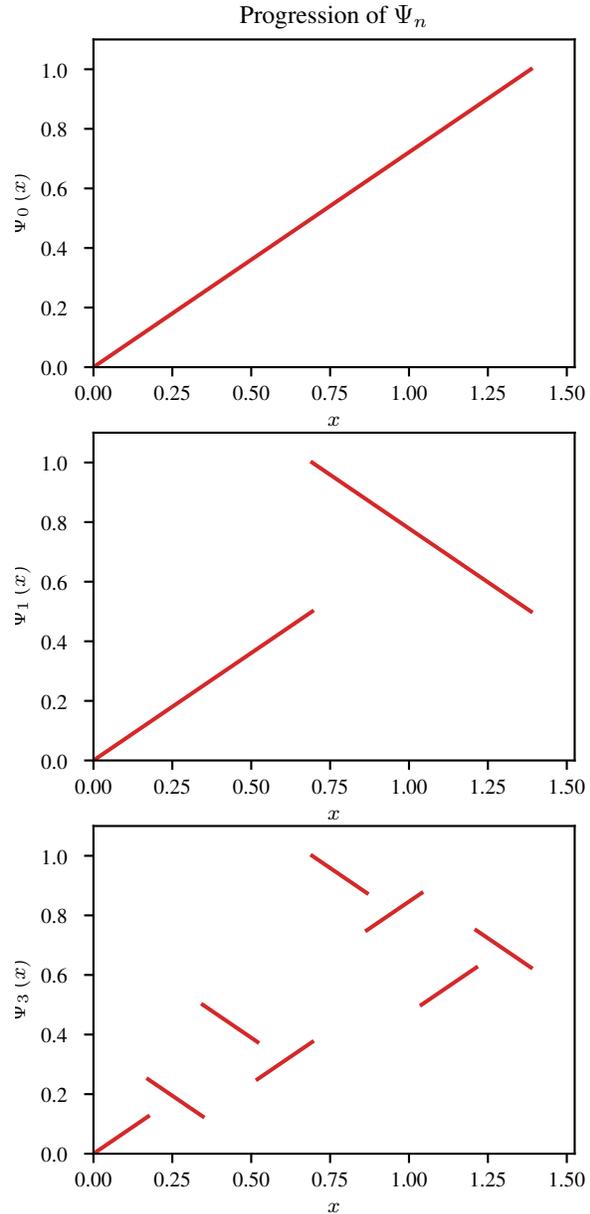}
\vspace{-10mm}
\caption{Several iterations of $\Psi_n$}
\label{fig:q_continuous_progression}
\end{figure}

Informally, we set $\widetilde{\Psi_0}(x)=x$, and at iteration $n$, we split the unit interval into $2^n$ even subintervals. In every even-numbered subinterval, we reflect the function horizontally around the midpoint of the subinterval. We may write this formally as follows.

Let $x \in [0,1], n \in \mathbb{N}$. Let:
\begin{displaymath}
a_n(x) = \frac{\ceil{x\cdot 2^n}+\frac{1}{2}}{2^n}
\end{displaymath}
That is, $a_n(x)$ is the midpoint of the unique half-open interval containing $x$: 
\begin{displaymath}
\Big(k\cdot 2^{-n}, (k+1)\cdot 2^{-n}\Big] ; \quad k \in \mathbb{N}
\end{displaymath}
Write $b_n(x)$ for the $n$-th digit in the binary expansion of $x$, and write $c_n(x)$ for the number of $b_m(x), m \leq n$ with $b_m(x) = 1$.

Importantly, $b_n(x)$ is ambiguous if $x$ is a dyadic rational, since in this case $x$ has both a terminating and a non-terminating expansion. For consistency with our choice of the upward-closed interval for the definition of $a_n(x)$, we choose the non-terminating expansion in this case.

Then:

\begin{gather*}
\widetilde{\Psi}_n(x)=x + \sum_{i=1}^n (-1)^{c_i(x)}\cdot b_i(x) \cdot 2(x - a_i(x)) \\
\widetilde{\Psi}(x)=x + \sum_{i=1}^\infty (-1)^{c_i(x)}\cdot b_i(x) \cdot 2(x - a_i(x))
\end{gather*}

First, it is clear that the series for $\widetilde{\Psi}(x)$ converges absolutely at every $x$, since $|2(x - a_i(x))| \leq 2^{-i}$. So this function is well defined. Also note that:
\begin{equation}
\label{eq:psi_cont}
|\widetilde{\Psi}_n(x) - \widetilde{\Psi}(x)| \leq 2^{-n}
\end{equation}
Note further that $\widetilde{\Psi}_n$ is continuous except at points of the form $k\cdot 2^{-n}$, since $a_m$, $b_m$ and $c_m$ are continuous at these points for all $m \leq n$.

Now consider a point $x_*$ which is not a dyadic rational. We wish to show that $\widetilde{\Psi}$ is continuous at $x_*$. So let $\epsilon > 0$. Choose $n$ so that $2^{-n} < \frac{\epsilon}{3}$. Since $x_*$ is not a dyadic rational, $\widetilde{\Psi}_n$ is continuous at $x_*$, i.e. there is some $\delta > 0$ such that $|x_* - x| < \delta \implies |\widetilde{\Psi}_n(x_*) - \widetilde{\Psi}_n(x)| < \frac{\epsilon}{3}$. But now, by \Cref{eq:psi_cont}:

\begin{gather*}
|\widetilde{\Psi}(x_*) - \widetilde{\Psi}_n(x_*)| < 2^{-n} < \frac{\epsilon}{3} \\
|\widetilde{\Psi}(x) - \widetilde{\Psi}_n(x)| < 2^{-n} < \frac{\epsilon}{3}
\end{gather*}

And so, whenever $|x_* - x| < \delta$, we have:

\begin{eqnarray*}
|\widetilde{\Psi}(x_*) - \widetilde{\Psi}(x)| & \leq & |\widetilde{\Psi}(x_*) - \widetilde{\Psi}_n(x_*)| \\
&& + |\widetilde{\Psi}_n(x_*) - \widetilde{\Psi}_n(x)| \\
&& + |\widetilde{\Psi}_n(x) - \widetilde{\Psi}(x)| \\
& < & \frac{\epsilon}{3} + \frac{\epsilon}{3} + \frac{\epsilon}{3} \\
|\widetilde{\Psi}(x_*) - \widetilde{\Psi}(x)| & < & \epsilon
\end{eqnarray*}

Thus, $\widetilde{\Psi}$ is continuous at $x_*$.

As noted above, we may now set $\Psi(x) = \widetilde{\Psi}(\frac{x}{A})$ to obtain a function which is continuous at all rational points.

\section{Implementation Details for Illustrative Example}
\label{sec:app_exp}

The network setup follows \Cref{eq:main} with 3 fully connected layers before the summation (acting on each input independently) and 2 fully connected layers after. Each fully connected layer has 1000 hidden units and is followed by a ReLU non-linearity. However, the third hidden layer, which creates the latent space in which the summation is executed, has a variable dimension $N$. $N$ is varied for each experiment in order to examine the influence of the latent dimension on the performance.

Training was conducted using the ADAM optimizer \citep{kingma2014adam} with an initial learning rate of $0.001$ and an exponential decay after each batch of $0.99$. Training was ended after convergence at 500 batches with a batch size of $32$. Samples were continuously drawn from the respective distributions. Therefore, there is no notion of training vs. test data or epoch sizes. The results $r^t$, measured as RMSE, were smoothed using exponential smoothing with $\alpha = 0.95$:
\begin{equation}
    r_{\text{smooth}}^t = (1-\alpha) \cdot r^t  + \alpha \cdot r^{t-1}
\end{equation}

The last, smoothed RMSE is extracted from each experiment, averaged over 500 different runs with different seeds and plotted in \Cref{fig:smooth_logxy}. The confidence intervals are calculated assuming a Gaussian distribution. The critical points are extracted by taking the smallest latent dimension which produces an RMSE of less than $10\%$ above the global minimum for this set size.

\emph{Out of distribution samples: }
We tested the performance of a trained model (500 inputs, 100 latent dimensions) on out-of-distribution samples to see to what extent the model exploits the statistical properties of the training examples. Below is a list with examples:
\begin{itemize}
    \item Input: $[1.0, 1.0, \dots, 1.0]$, output: $0.933$, true label: $1.0$
    \item Input: 200 times $1.0$ and 300 times $0.0$, output: $0.413$, true label: $0.0$
    \item Input: $[0.002, 0.004, 0.006, ..., 1.0]$, output: $0.5006$, true label: $0.5000$
\end{itemize}
The distributions the samples were drawn from during test time were the uniform distribution, a Gaussian and a Gamma distribution. Each sample consisting of 500 values was randomly drawn from one of these distributions. Hence, the first two are very unlikely samples from the provided distributions. The poor performance of the model on these two examples can therefore be taken as an indication that the model does utilize information about the underlying distributions when estimating the median. The third sample is much closer to a realistic sample from, e.g., the uniform distribution (in our case between $0.0$ and $1.0$), which makes it unsurprising that the model performs much better on this task. It is worth noting that the notion of 'likely' examples of uniform distributions is of course an intuitive one. Given that no value is repeated, every specific set of numbers is of course equally likely as long as all numbers lie within the interval of the uniform distribution.

\end{document}